\theoremstyle{plain}
\newtheorem{theorem}{Theorem}[section]
\newtheorem{proposition}[theorem]{Proposition}
\newtheorem{lemma}[theorem]{Lemma}
\newtheorem{definition}[theorem]{Definition}
\newtheorem{assumption}[theorem]{Assumption}
\newtheorem{myexample}[theorem]{Example}
\newcommand{\ip}[1]{\langle #1\rangle}
\newcommand{\safetyset}{\mathcal{S}}
\newcommand{\hist}{\mathscr{H}}
\newcommand{\permset}{\widetilde{\mathcal{S}}}
\providecommand{\argmax}{\mathop\mathrm{arg\, max}} 
\newcommand{\ttheta}{\tilde\theta}
\newcommand{\ta}{\tilde{a}}
\newcommand{\spread}{\mathfrak{s}}
\newcommand{\eff}{\mathscr{E}}
\newcommand{\saf}{\mathscr{S}}
\newcommand{\indexU}{\mathsf{U}}
\newcommand{\indi}{\mathds{1}}
\newcommand{\tamat}{\tilde{A}}
\newcommand{\matconf}{\boldsymbol{\mathcal{{C}}}}
\newcommand{\confset}{\mathcal{C}}
\newcommand{\algoname}{\textsc{doss }}
\newcommand{\algonamenospace}{\textsc{doss}}
\newcommand{\algonameparam}{\textsc{doss}($\delta$) }
\newcommand{\tA}{\tilde{A}}
\newcommand{\AI}{A(I)}
\newcommand{\ali}{\alpha(I)}
\newcommand{\tai}{\tA(I)}
\newcommand{\unk}{\mathbf{1}_U}
\newcommand{\xsafe}{x^{\mathsf{s}}}
\newcommand{\msafe}{M^{\mathsf{s}}}
\newcommand{\defispace}{\vspace{-.5\baselineskip}}
\title{Safe Linear Bandits over Unknown Polytopes}
\author{
Aditya Gangrade\textsuperscript{1,2}, Tianrui Chen\textsuperscript{1}, Venkatesh Saligrama\textsuperscript{1}\\ \textsuperscript{1}Boston University, \textsuperscript{2}University of Michigan\\  \texttt{\{gangrade, trchen, srv\}@bu.edu}
}
\date{\vspace{-\baselineskip}}
\begin{document}

\maketitle

\begin{abstract}

    The safe linear bandit problem (SLB) is an online approach to linear programming with unknown objective and unknown \emph{roundwise} constraints, under stochastic bandit feedback of rewards and safety risks of actions. We study the tradeoffs between efficacy and smooth safety costs of SLBs over polytopes, and the role of aggressive {doubly-optimistic play} in avoiding the strong assumptions made by extant pessimistic-optimistic approaches. 

    We first elucidate an inherent hardness in SLBs due the lack of knowledge of constraints: there exist `easy' instances, for which suboptimal extreme points have large `gaps', but on which SLB methods must still incur $\Omega(\sqrt{T})$ regret or safety violations, due to an inability to resolve unknown optima to arbitrary precision. We then analyse a natural doubly-optimistic strategy for the safe linear bandit problem, \algonamenospace, which uses optimistic estimates of both reward and safety risks to select actions, and show that despite the lack of knowledge of constraints or feasible points, \algoname simultaneously obtains tight instance-dependent $O(\log^2 T)$ bounds on efficacy regret, and $\widetilde O(\sqrt{T})$ bounds on safety violations, thus attaining near Pareto-optimality. Further, when safety is demanded to a finite precision, violations improve to $O(\log^2 T).$ These results rely on a novel dual analysis of linear bandits: we argue that \algoname proceeds by activating noisy versions of at least $d$ constraints in each round, which allows us to separately analyse rounds where a `poor' set of constraints is activated, and rounds where `good' sets of constraints are activated. The costs in the former are controlled to $O(\log^2 T)$ by developing new dual notions of gaps, based on global sensitivity analyses of linear programs, that quantify the suboptimality of each such set of constraints. The latter costs are controlled to $O(1)$ by explicitly analysing the solutions of optimistic play.

\end{abstract}

\section{Introduction}\label{sec:intro}

The \textbf{Safe Linear Bandit (SLB) problem:} Consider a linear program \( \max \theta^\top x : Ax \le \alpha \) where the feasible set $\safetyset := \{A x \le \alpha\}$ is known to be a nonempty bounded polytope in $\mathbb{R}^d$, but neither the objective $\theta \in \mathbb{R}^d$, nor the constraint matrix $A \in \mathbb{R}^{m \times d}$ are completely known a priori, and no action known a priori to be safe (i.e., feasible) is available. Instead, a learner sequentially picks actions $x_t$, with the goal of choosing $x_t$ that are effective and safe \emph{in each round}. Learning is enabled through stochastic bandit feedback in the form of a reward signal $R_t = \ip{\theta,x_t} + w_t^R$ and a risk signal $S_t : \mathbb{E}[S_t|x_t] = Ax_t + w_t^S$ where $(w_t^R, w_t^S)$ is a noise process. 

Ideally, we would explore only in $\safetyset,$ but since we do not know it (or any safe point) to start with, some safety violation must necessarily occur over the course of learning. It is natural in many applications to penalise such violation `softly'. With this view, we measure the performance of the learner over $T$ rounds through the \emph{efficacy regret}, $\eff_T,$ and the \emph{net safety violation} $\saf_T$ defined as  \begin{equation}\setlength{\abovedisplayskip}{.3\baselineskip}\setlength{\belowdisplayskip}{.1\baselineskip}
    \eff_T := \sum_{t \le T} \ip{\theta, x^* - x_t})_+ \quad \textit{ and } \quad \saf_T := \sum_{t \le T} (\max_i \ip{a^i,x_t} - \alpha^i)_+, \label{equation:metric_definitions}
\end{equation} 
where $(z)_+ := \max(z,0)$, and $x^*$ is the constrained optimum. These same $\ell_1$ metrics were proposed in the finite-armed setting by \citet{efroni2020exploration} and \citet{chen2022strategies}. The main structural property that makes $\eff_T, \saf_T$ pertinent in the roundwise scenario is that they accumulate only the \emph{positive parts} of the roundwise inefficiency or safety violation. Indeed, since $\eff_T$ sums over $(\ip{\theta, x^* - x_t})_+,$ playing any $x_t$ with better reward than $x^*$ leads to no decrease in it, and instead it increases $\saf_T$ since such an $x_t$ must be infeasible. Conversely, since $\saf_T$ sums the largest roundwise violations, playing a safe but under-effective $x_t$ increases $\eff_T$ but does not reduce $\saf_T$. Thus, the only way to make both $\eff_T$ and $\saf_T$ small is to ensure that most $x_t$s are near-safe \emph{and} near-optimal. We note that the choice of the linear penalty on violations above is just out of convenience: any penalty of the form $f( \max_i (\ip{a^i,x_t} - \alpha^i)_+),$ where $f$ smoothly decays to $0$ near $0^+,$ is amenable to our analysis (\S\ref{appx:alternate_relaxations}).

\paragraph{Motivating Examples.} The interplay of unknown rewards and constraints is a common feature of application domains of bandits. In drug trials, one needs to balance the efficacy of a treatment regimen with its risk of various side-effects (i.e., the probabilities that it induces harmful side-effects); in crowdsourcing, one must balance the cost of completing tasks with the quality of the resulting work; and recommmender systems must balance the click-rate of suggestions with their effects on engagement (such as watch-time or revisiting rates). In such cases, we must enforce the constraint in each round, e.g., completing one task well does not license us to be sloppy on the next. Further, it is nontrivial to find a feasible starting point, since, e.g., this requires knowing worker quality distributions a priori, or knowing which compounds balance the side-effects of active compounds a priori. Nevertheless, soft enforcement is meaningful, e.g., if the risk of a side-effect is slightly over $\alpha$, this only leads to a slight increase in overall numbers of adverse effects realised; and a slight reduction in the mean watch-time is an acceptable price for learning. Thus strong control on $\saf_T$ ensures that in the long run, the system performs arbitrarily close to safety.

\paragraph{Soft Roundwise Enforcement over Polytopes.} We focus on understanding what performance can be achieved while ensuring that $\saf_T = o(T)$. At the first glance, one expects control of the form $\max(\eff_T, \saf_T) = \widetilde{O}(\sqrt{T}),$ which indeed follows from standard techniques (\S\ref{sec:poly_upper_bounds}). However, this question is most interesting in a refined sense: since we are work over a \emph{polytopal} domain,\footnote{While obvious, let us explicitly note here that the problem over polytopal domains is of significant importance, since this corresponds to the ubiquitous questions of linear programming.} prior work on linear bandits tells us that if $\safetyset$ were known, one can derive \emph{instance-dependent} bounds of $O(\log^2 T)$ on $\eff_T,\saf_T$ \citep[e.g.][]{abbasi2011improved}. This paper is concerned with the question \begin{quote} 
\begin{tcolorbox}[colback=gray!20]
Over polytopal domains, is it possible to attain instance-dependent polylogarithmic bounds on $\eff_T$ and $\saf_T$ without knowing $\safetyset$ in advance?
\end{tcolorbox}
\end{quote} 

\paragraph{Our Contributions} approach this by studying the efficacy-safety tradeoffs for SLBs, and by analysing a natural doubly-optimistic method for the same. Concretely, we show that
\begin{itemize}[wide, nosep, leftmargin = 10pt]
    \item \textbf{Simultaneous logarithmic control on $\eff_T$ and $\saf_T$ is impossible.} We show that for any SLB algorithm, there exists an instance \emph{with large `gaps'} on which the algorithm incurs $\max(\eff_T,\saf_T) = \Omega(\sqrt{T}).$ The key property of these instances is the large, i.e., $\Omega(1)$ gap, and due to this gap each instance could be solved $\max(\eff_T,\saf_T) = O(\log^2 T)$ if the feasible set $\safetyset$ were known (\S\ref{sec:lower_bound}). However, a polynomial lower bound arises since the lack of knowledge of $\safetyset$ induces a `\emph{precision barrier},' that is, the fact that no method can \emph{locate} effective and safe actions to precision better than $t^{-1/2}$ after $t$ rounds of play. This same barrier also renders the standard primal approach of analysing polytopal linear bandits via their extreme points ineffective for SLBs(Fig.~\ref{fig:intro_illustration},~left). We further note that the constructed instances are simple enough to embed into any nontrivial set of SLB instances, making the result generic rather than specific to the particular situation we study.
    \item \textbf{Nevertheless, doubly-optimistic (DO) methods can attain $\eff_T = O(\log^2 T)$ and $\saf_T = \widetilde{O}(\sqrt{T})$.} Specifically, we show that these bounds are attained by the DO method \algoname (\S\ref{sec:main_algorithm}), which generalises the finite-armed approach of \citet{efroni2020exploration} and \citet{chen2022strategies}, and has been studied for aggregate enforcement (see below) by \citet{agrawal2014bandits}. \algoname builds an `optimistic' estimate $\permset_t$ of $\safetyset$, and selects actions optimistically over the same. Since these bounds match our lower bounds up to polylog-factors, \algoname is near-Pareto-optimal for SLBs.
    \item \textbf{The aforementioned precision barrier is the sole obstruction to logarithmic bounds.} We argue that in important special cases, \algonamenospace, with either no or mild changes, attains  $\max(\eff_T, \saf_T)  = O(\log^2 T)$. The key property of such settings is an innate way to avoid having to identify good primal actions to arbitrary precision, illustrating that this is the key obstruction in SLBs.
\end{itemize}

\begin{figure}[t]
    \centering
    \includegraphics[width = 0.32\linewidth]{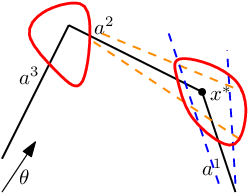}~\hspace{0.1\linewidth}~\includegraphics[width = 0.37\linewidth]{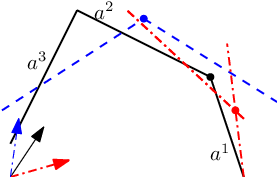} 
    \caption{\footnotesize \textsc{The challenge, and our approach.} \emph{Left.} The usual primal view of linear bandits over polytopes breaks down, since noisy estimates of the unknown $A$ induce a continuum of potential locations for extreme points (red blobs). \emph{Right} Taking a dual linear programming view, we can identify extreme points as arising by saturating $d$ independent constraints. We generalise this view by showing that \algoname plays by saturating noisy versions of $d$ constraints. Poor play can arise from picking the wrong set of constraints (blue), or using a poor estimate for the right set of constraints (red).}
    \label{fig:intro_illustration}
\end{figure}

\paragraph{Technical novelty} of the paper lies in the analysis of \algonamenospace. Since the primal approach for obtaining polylog regret in linear bandits fails, we instead approach the problem through a novel dual analysis, that exploits the fact that extreme points of polytopes can be dually viewed as points saturating $d$ constraints (Fig.~\ref{fig:intro_illustration},~right). We show that this view generalises, i.e., \algoname picks actions by saturating a noisy version of $d$ constraints. This allows us to break the analysis into two threads\\ a)\emph{a combinatorial identification problem of whether the `right' set of constraints is saturated, and \\ b) whether effective points are played when the `optimal' sets of constraints are saturated.}\\ The efficacy loss due to the former is controlled to $O(\log^2 T)$ by developing a novel notion of `dual gap' associated with each `poor' set of constraints, which arise via a global LP sensitivity analysis approach. The second issue is handled via a careful analysis of optimistic play to argue that under mild nondegeneracy assumptions, it cannot play ineffective actions when saturating the `optimal' set of constraints, which controls the efficacy loss due to such play to $O(1)$.

\subsection{Related Work} We briefly describe the two main lines of work on constrained bandits (also see \S\ref{appx:related_work}).  

\paragraph{Hard Roundwise Enforcement.} Instead of the soft sense we study, one can demand roundwise enforcement in a \emph{hard} sense, requiring that with high probability (whp), the constraints always be met, i.e., whp, $\saf_t = 0$. Since this is clearly not possible without knowing a safe point to start with, methods along these lines usually assume a priori knowledge of a point $\xsafe$ in the \emph{interior} of $\safetyset$, i.e., with positive safety margin $\msafe := -\max_i (\ip{a^i, \xsafe} - \alpha^i)$. Given the knowledge of $(\xsafe, \msafe),$ recent lines of work \citep{amani2019linear, moradipari2021safe, pacchiano2021stochastic, afsharrad2023convex, hutchinson2023impact, varma2023stochastic, pacchiano2024contextual} have proposed various `pessimistic-optimisic' (PO) methods for the SLB problem,\footnote{and also safe MDPs \citep[e.g.][]{turchetta2016safe, wachi2020safe, bernasconi2022safe, vaswani2022nearoptimal} } which operate by exploring in the vicinity of $\xsafe$, and build pessimistic estimates of $\safetyset$, over which they act optimistically. While such methods attain the strong safety guarantee of $\saf_T = 0$ whp, the associated costs are significant: $\mathrm{(i)}$ the knowledge of $(\xsafe, \msafe)$ is nontrivial to obtain, and the costs of obtaining the same are not accounted for in this literature,\footnote{Note that the need for a safety margin may make even seemingly simple settings challenging. E.g., if $x$ is the amount of different drugs assigned to a treatment,  one may think that the `no-treatment' drug cocktail $x = 0$ is always `safe', and can serve as $\xsafe$. However, in treatment regimens, it is common that any dose of compound $1$ must be accompanied by a proportional dose of compound $2$ to manage the side-effects induced by compount $1$, i.e, the constraint may be of the form $\ip{ (a_1,-a_2) , x} \le 0,$ in which case $x =0 $ has no safety margin, and so is unusable for PO methods.} and $\mathrm{(ii)}$ the resulting efficacy bounds, $\eff_T = O(d\sqrt{T}/\msafe)$, quantitatively depend on this safety margin.\footnote{We also include a simulation study in \S\ref{section:experiments} that indicates that the safety violations of \algoname are considerably better behaved than the efficacy costs of the PO method \textsc{safe-LTS} \citep{moradipari2021safe}. }

\paragraph{Aggregate Enforcement.} Instead of roundwise metrics, \emph{aggregate constraint enforcement} aims to control $\mathscr{R}_T = \sum \ip{\theta, x^* - x_t}$ and $\mathscr{A}_T = \sum_{t \le T} \max_i (\ip{a^i,x_t} - \alpha_i)$ \citep[e.g.][]{badanidiyuru2013bandits, badanidiyuru2014resourceful, agrawal2014bandits, agrawal2016linear, agrawal2016efficient}. The key difference from the roundwise setting is that there is no nonlinearity in the roundwise penalties in $\mathscr{R}_T,\mathscr{A}_T$. This small change drastically affects allowable behaviour for such problems, e.g., we can ensure $\mathscr{A}_T = o(T)$ while alternating between playing `very unsafe' and `very safe' actions, since the negative costs of the latter cancel the positive costs of the former, but this would instead incur $\saf_T = \Omega(T)$. Of course, $\mathscr{A}_T$ is an inappropriate metric for safety contexts, e.g., treating one patient unsafely cannot be balanced by assigning a placebo to the next. We note that while the analysis of \citet{agrawal2014bandits} \emph{can} be extended to show $(\eff_T, \saf_T) = \widetilde{O}(\sqrt{T})$, we go much beyond this basic observation through our the finer grained upper bounds of $(\log^2 T, \widetilde{O}(\sqrt{T})$, as well as our instance-wise obstructions, which are both novel. We also note that most of the literature on aggregate enforcement explicitly assumes that $x = 0$ is safe, and that the entries of $A$ are positive, which we do not need. Aggregate enforcement remains an active area of research, e.g., `Conservative bandits' \citep[e.g.][]{wu2016conservative} enforce properties of the form $\mathscr{A}_t = O(\sqrt{t})$ for most $t$, and \citet{liu2021efficient} show that given a Slater parameter, one can enforce $\mathscr{A}_t \le 0$ for all $t$ large enough. We also note that most work on constrained MDPs is of this flavour \citep[e.g.][and references therein]{vaswani2022nearoptimal}.

\section{Problem Setup}\label{sec:problem_setup} 

For naturals $a \le b$, let $[a:b] := \{a,\dots, b\}.$ $\ip{\cdot, \cdot}$ and $\|\cdot\|$ denote the inner-product and $\ell_2$-norm in $\mathbb{R}^d$ respectively, and for a matrix $V\succ 0$, $\|z\|_V := \sqrt{\ip{z, Vz}}$. For a $p \times q$ matrix $M$, and a set $\mathsf{S} \subset [1:p],$ $M(\mathsf{S})$ denotes the $|\mathsf{S}| \times q$ submatrix of $M$ preserving rows indexed in $\mathsf{S}$.

\paragraph{Setting.} An instance of polytopal SLB problem is parameterised by an a polytopal region $\mathcal{X} =  \{Bx \le \beta\}\subset \mathbb{R}^d$, a known constraint level vector $\alpha \in \mathbb{R}^U,$ and latent objective $\theta \in \mathbb{R}^d$ and constraint matrix $A \in \mathbb{R}^{U \times d},$ which define the principal LP of relevance. Here, the constraints $\{Bx \le \beta\}$ should be thought of as arising from pre-determined hard limits on $x$.\footnote{e.g., known box constraints in crowdsourcing account for maximum worker capacity, and nonnegativity of work.} For notational succinctness, we will embed these constraints into $(A,\alpha)$ by extending $A$ to lie in $\mathbb{R}^{m \times d}$ for $m = U+K,$ and setting the last $K$ rows of $A$ to $B$, and similarly augment $\alpha$ to include $\beta$. We shall often need the notation $\unk = (1,\cdots, 1, 0,\cdots, 0)$, with $U$ ones, which indicates the unknown constraints. With this notation, the principal LP of interest is \( \max_{x \in \mathcal{X}} \ip{\theta,x} : Ax \le \alpha.\)

\paragraph{Play.} The problem proceeds in rounds, indexed by $t$. For each $t$, we choose an $x_t \in \mathcal{X},$ and receive reward feedback $r_t$ and safety feedback $\{s_t^i\}_{i \in [1:U]}$ that satisfy $r_t = \ip{\theta, x_t} + w_t^R$ and $\smash{s_t^i = \ip{a^i, x_t} + w_t^{S,i}},$ where the various $w_t$s are each subGaussian noise processes, which need not be independent across $i$. The information set of the learner at time $t$ is $\hist_{t-1} := \{ (x_\tau, r_\tau, \{s^i_\tau\}_{i \in [1:U]})_{\tau < t}\},$ and $x_t$ must be adapted to the filtration induced by $\hist_{t-1}$.

\paragraph{Metrics.} We will control the \emph{Efficacy Regret} and \emph{Net Safety Violation} \eqref{equation:metric_definitions}. We reiterate that these have pertinence to the SLB setting because they penalise only the positive parts of roundwise costs.

\paragraph{Assumptions.} We conclude by noting standard assumptions due to \citet{abbasi2011improved}.
    \begin{enumerate}[wide, nosep]
    \item Boundedness: $\|\theta\|\le 1, \|a^i\|\le 1$ for all $i$, and $\mathcal{X} \subset \{\|x\|\le 1\}$ is a bounded polytope.
    \item SubGaussian Noise: $\forall t,$ $w_t:= (w_t^R, \{w_t^{S,i}\}_{i\in[1:U]})$ is conditionally centred and $1$-subGaussian given $\mathcal{F}_{t} := \sigma(\mathcal{H}_{t-1},x_t),$ i.e., \( \forall t,\mathbb{E}[w_t|\mathcal{F}_{t}] = 0, \forall \lambda, \mathbb{E}[\exp(\lambda^\top w_t)|\mathcal{F}_{t}] \le \exp(\|\lambda\|^2/2).\)
\end{enumerate} All subsequent results should be taken to hold only under the above. See \S\ref{appx:standard_assumptions} for more details.

\section{A Doubly Optimistic Algorithm for Safe Linear Bandits}

As previously discussed, our main method of interest is the natural approach of playing optimistically from an optimistic \emph{permissible set} \citep{agrawal2014bandits, efroni2020exploration, chen2022strategies}. We summarise the method, and establish key notation that is used throughout.

\subsection{Confidence Sets and Noise Scales}\label{sec:confidence_sets}

We take the standard approach \citep{abbasi2011improved}. Let the matrix $X_{1:t} = [x_1, \dots, x_t]^\top$ and the vectors $R_{1:t} = [r_1, \dots, r_t]^\top, S^i_{1:t} = [s^i_1, \dots, s^i_t]^\top$ arise by stacking the actions and feedback. The {1-regularised} least squares (RLS) estimate of $\theta, a^i$ using $\hist_{t-1}$ is \[ \hat{\theta}_t=(X_{1:t}^\top X_{1:t}+\lambda I)^{-1}X_{1:t}^\top R_{1:t},\quad \hat{a}_t^i=(X_{1:t}^\top X_{1:t}+\lambda I)^{-1}X_{1:t}^\top S_{1:t}^i. \setlength\abovedisplayskip{3pt}\setlength\belowdisplayskip{3pt}.\] Of course, if $i \in [U+1:m],$ then we do not need to estimate $i$, and we shall just set $\hat{a}_t^i = a^i$ instead. We will collate the $\hat{a}^i_t$s into a matrix $\hat{A}_t$ row-wise. Let us define the signal strength as $V_t:= \sum_{s \le t} x_s x_s^\top + I,$ and for $\delta \in (0,1)$, the $m$-confidence radius as $\sqrt{\omega_t(\delta)} = 1 + \sqrt{\frac12 \log \frac{(U+1)\sqrt{\det V_{t-1}}}{\delta}}.$ The main results are based on the following two concepts, which we explicitly delineate. \defispace

\begin{definition}
    For any time $t$, the \emph{RLS confidence sets} are \[\setlength{\abovedisplayskip}{.3\baselineskip}\setlength{\belowdisplayskip}{.3\baselineskip}\confset_t^{\theta}(\delta) := \{\ttheta: \|\ttheta-\hat\theta_t\|_{V_{t-1}} \le \sqrt{\omega_t(\delta)}\} \text{ and } \matconf_t(\delta) := \{ \tA : \forall \textrm{ rows } i, \|\ta^i - \hat{a}_t^i\|_{V_{t-1}} \le \sqrt{\omega_t(\delta)} \unk\},\] and the \emph{local noise scale} is \(\rho_t(x;\delta)  := 2\sqrt{\omega_t(\delta)}\|x\|_{V_{t-1}^{-1}}.\) \defispace
\end{definition}
The key properties we need are due to \citet{abbasi2011improved}, and are summarised below, and proved in \S\ref{appx:online_linear_reg_background}. We will often drop the dependence of $\confset_t^i(\delta), \matconf_t(\delta),$ and $\rho_t(x;\delta)$ on $\delta$.
\begin{lemma}
\label{lem:noise_scale}
The confidence sets are consistent, i.e., \( \mathbb{P}\left( \forall t,  \theta \in \confset_t^\theta(\delta),  A \in \matconf_t(\delta) \right) \ge 1-\delta.\) Further, under consistency, the noise scale $\rho_t(x;\delta)$ at any $x \in \mathcal{X}$ satisfies $\forall x \in \mathcal{X},$ \[\setlength{\abovedisplayskip}{.3\baselineskip}\setlength{\belowdisplayskip}{.3\baselineskip} \forall \tA \in \matconf_t(\delta), |(\tA - A)x| \le \rho_t(x;\delta)\unk, \quad\textrm{and}\quad \forall \ttheta \in \confset_t^\theta(\delta), |\langle \tilde\theta - \theta, x\rangle| \le \rho_t(x;\delta). \] Finally, for any sequence $\{x_t\}$, $\sum_{s\le t} \rho_s(x_s)^2 = O(d^2 \log^2 t)$ and $\sum_{s \le t} \rho_s(x_s) = \widetilde{O}(\sqrt{d^2 t}).$\defispace
\end{lemma}

\subsection{Doubly-Optimistic Safe Selection}\label{sec:main_algorithm}

\noindent We describe the method, \algoname (Algorithm~\ref{alg:main_scheme}). The key construction herein is the optimistic \emph{permissible set} of points $x$ that are safe according to at least one choice of constraints in $\matconf_t$: \begin{equation}\label{eq:permissible_set} \permset_t(\delta) := \{x : \exists \tilde{A} \in \matconf_t(\delta) \textrm{ s.t. } \tilde{A} x \le \alpha \}. \end{equation} 

\begin{algorithm}[ht]
\caption{Doubly-Optimistic Safe Selection (\algonamenospace) ($\delta$)}\label{alg:main_scheme}
\begin{algorithmic}
\STATE {\bfseries Input:} $\delta \in (0,1)$
\FOR{$t=1,2,\cdots$}
\STATE Construct $\permset_t(\delta)$ as in (\ref{eq:permissible_set}).
\STATE Optimize (\ref{eq:action}) and play $x_t$. 
\STATE Observe $r_{t,x_t}$, $\{s_{t,x_t}^i\}$
\STATE Update $X,R,\{S^i\},V,C$
\ENDFOR
\end{algorithmic}
\end{algorithm}

The set $\permset_t$ consists of all actions that may \emph{plausibly} be safe given $\hist_t$. The arm $x_t$ is selected optimistically from $\permset_t$ as \begin{equation}\label{eq:action} (\tilde\theta_t, x_t) \in \argmax\{ \langle \tilde\theta, x\rangle: {\tilde\theta \in \confset_t^\theta(\delta), x \in \permset_t(\delta)} \} \end{equation}

The optimistic construction of the permissible set is the main distinction between the DO and PO approaches (\S\ref{sec:intro}), which instead work with the pessimistic set $\Pi_t := \{ x : \forall \tA \in \matconf_t, \tA x \le \alpha\} \subset \safetyset$ whp. Instead, $\permset_t(\delta) \supset \safetyset$ whp. Of course, since the known constraints in $A$ are enforced, $\permset_t(\delta) \subset \mathcal{X}.$ 

\section{Warm Up: Polynomial Bounds on Regret and Safety Cost, and Going Beyond}\label{sec:poly_upper_bounds}

An immediate application of the approach of \citet{abbasi2011improved} yields the following basic result, establishing that \algoname is a reasonable procedure. \begin{theorem}\label{thm:poly_upper_bound}
    The actions $\{x_t\}$ of \algonameparam yield, whp, \( \eff_T = \widetilde{O}(\sqrt{d^2 T}) \), and \(\saf_T = \widetilde{O}(\sqrt{d^2 T}). \)    
    
    \noindent{Proof Sketch.} \emph{By Lemma~\ref{lem:noise_scale}, $\forall t, \theta \in \confset_t^\theta, A \in \matconf_t$ whp, and so $x^* \in \permset_t(\delta)$ whp. Thus, \eqref{eq:action} ensures $\langle \ttheta,x_t\rangle \ge \ip{\theta, x^*}.$ But, by the noise-scale characterisation in Lemma~\ref{lem:noise_scale}, $\ip{\ttheta, x_t} \le \ip{\theta,x_t} + \rho_t(x_t),$ and so $\ip{\theta, x^*-x_t}\le \rho_t(x_t)$. On the other hand, since $x_t \in \permset_t,$ there exists some $\tA \in \permset_t : \tA x_t \le \alpha$. But again $\alpha \ge \tA x_t \ge A x_t - \rho_t(x_t)\unk,$ and so $\max_i (\ip{a^i,x} - \alpha^i)_+ \le \rho_t(x_t)$ Consequently, $\eff_T \le \sum_{t \le T} \rho_t(x_t),$ \emph{and} $\saf_T \le \sum_{t \le T} \rho_t(x_t),$ and the bound follows from Lemma~\ref{lem:noise_scale}.}
\end{theorem}

\paragraph{Polytopes to Break Through $\sqrt{T}$?} The above result holds in fact holds over any convex domain without change. However, our domain of interest is linear programming, i.e., $\safetyset$ and $\mathcal{X}$ are polytopes, and thus is much more structured. Indeed, for linear bandits with \emph{known} $\safetyset$, optimistic play yields instance-dependent \emph{logarithmic} regret bounds for large $T$ \citep{abbasi2011improved}. Such results rely on the observation that if $\safetyset$ is known, any action that an optimistic method takes lies in the \emph{finite} set of extreme points of $\safetyset.$ Therefore, $\exists \Delta > 0$ such that for any suboptimal $x_t$, $\ip{\theta, x^*-x_t} \ge \Delta$, and which directly leads to regret bounds of $O(\log^2(T)/\Delta)$.\footnote{The key trick is that $\mathscr{R}_T \le \sum \rho_t(x_t) \mathbf{1}\{\rho_t(x_t) \ge \Delta\} \le \sum \rho_t(x_t)^2/\Delta.$} 

This raises the natural question: \emph{can we also attain logarithmic bounds on $(\eff_T, \saf_T)$ when some of the constraints are unknown?} Answering this will occupy us for the remainder of this paper.

\section{Impossibility of Simultaneous Logarithmic Bounds on Both Efficacy and Safety}\label{sec:lower_bound}

The question we raised in \S\ref{sec:poly_upper_bounds} needs a little care to formulate: since we do not know $\safetyset$, it is unreasonable to expect bounds that scale only with the optimality gap of actions, since unsafe points outside of $\safetyset$ must also be eliminated. We can account for this by also considering the spurious extreme points induced by the bounding polytope $\mathcal{X},$ and consider \begin{align*}\setlength{\abovedisplayskip}{.2\baselineskip}\setlength{\belowdisplayskip}{.2\baselineskip} \mathcal{E} := \{ \textrm{extreme points of } \safetyset\} \cup \{\textrm{extreme points of } \mathcal{X} \}.\end{align*} Now, $\mathcal{E}$ is again a finite set, and for any $x \in \mathcal{E}\setminus\{ x^* \},$ either $x$ is feasible but suboptimal, in which case $\ip{\theta, x^* - x}  > 0$ or it is infeasible, in which case $\max_i (\ip{a^i,x} - \alpha^i) > 0.$ Let us say that an instance is $\Delta$-well separated if the smallest such lower bound is at least $\Delta$. Then note that if we knew $\mathcal{E},$ then it is easy to obtain $O(\Delta^{-1}\log^2 T)$ bounds using the technique described in \S\ref{sec:poly_upper_bounds}. The refined question of interest is: \emph{can we always attain logarithmic efficacy regret and safety violations for well-separated SLB instances?} Surprisingly, the answer to this is negative, as we show in \S\ref{appx:unslacked_regret_lower_bound_proof}.

\begin{theorem}\label{thm:unslacked_regret_lower_bound}
    For every SLB algorithm, there exists a $1/8$-well-separated instance on which the algorithm must incur $\max(\mathbb{E}[\eff_T], \mathbb{E}[\saf_T]) = \Omega(\sqrt{T})$. 
\end{theorem}

\begin{wrapfigure}[9]{r}{0.4\textwidth}
\vspace{-1.25\baselineskip}
\includegraphics[width = \linewidth]{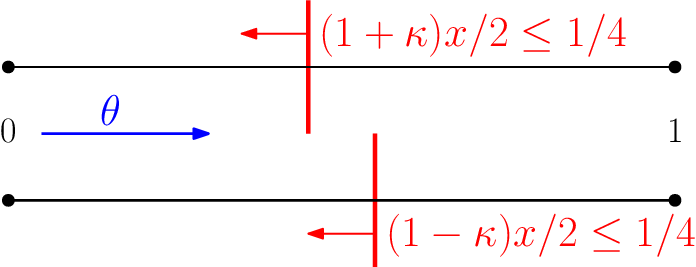}
\caption{\footnotesize An obstruction to logarithmic bounds in safe linear bandits.}
\label{fig:lower_bound}\end{wrapfigure}
\noindent \emph{Proof Sketch}. The obstruction is illustrated in Figure~\ref{fig:lower_bound}. We study the $1$D problem $\max x$ under the known constraints $0 \le x \le 1,$ reward parameter $\theta = 1,$ and the unknown constraint $a x \le 1/4$. Consider the case $a \in \{ \nicefrac{(1 \pm \kappa)}{2}\}$ for $\kappa \le \nicefrac14$. For these instances, $\mathcal{E} = \{0,1, \nicefrac{1}{2(1\pm\kappa)} \}$, and the last point is optimal. Further, $0$ is at least $(2(1\pm \kappa))^{-1} \ge 2/5$-inefficient, and $1$ violates the constraint by $\nicefrac{(1 \pm 2\kappa)}{4} \ge \frac{1}{8},$ and so either instance is $1/8$-well-separated.  

But, no matter the $x_t$s, we cannot estimate $a$ to error better than $\nicefrac{1}{\sqrt{t}},$ and so we cannot eliminate either of $\nicefrac{1 \pm \kappa}{2}$ if $t < \frac{1}{\kappa^{2}}.$ Now, if the truth were $a = \nicefrac{(1-\kappa)}2,$ playing $x_t < \nicefrac{2}{1-\kappa^2}$ incurs inefficacy $\ge 2\kappa$, and conversely if $a = \nicefrac{(1+\kappa)}{2},$ playing $x_t \ge \nicefrac{2}{1-\kappa^2}$ violates safety by $2\kappa$. Thus, at least one of $\eff_T^{\nicefrac{(1+\kappa)}2}$ and $\saf_T^{\nicefrac{(1-\kappa)}2}$ must be $\Omega(\kappa \cdot \min(T,\kappa^{-2})).$ The bound follows by choosing $\kappa= 1/\sqrt{T}$. \hfill$\Box$\vspace{.5\baselineskip}

\paragraph{Impossibility of \emph{instance-dependent} simultaneous logarithmic bounds.} We highlight that the above lower bound scales as $\sqrt{T}$ despite \emph{constant order separation} in the instance. This stands in sharp contrast to existing minimax lower bounds for standard bandits \cite[e.g.][]{shamir2015complexity}, which set $\Delta \sim T^{-\nicefrac12}$ to show $\Omega(\sqrt{T})$ bounds. The barrier to logarithmic control in SLBs is more fundamental, and comes from an inability to refine the precise location of the optimal point, rather than because there are suboptimal points in the noiseless problem that have small gaps. In other words, the issue is one of \emph{precision} rather than one of hardness in the underlying LP, and this makes it impossible to be both very efficient and very safe on all instances. We further observe that the construction is extremely simple, and thus can embed into essentially any class of instances (e.g., by revealing a line that the optimum lies on), and so this issue is pervasive, rather than limited to specific hard cases.

Nevertheless, the result does not preclude that \emph{one} of $\eff_T,\saf_T$ is small. In fact, although they need the extra information $(\xsafe, \msafe)$, we can view PO schemes as saturating this bound, since they achieve $\eff_T = \widetilde{O}(\sqrt{T}), \saf_T = 0.$ We shall show in the subsequent that the DO method \algoname saturates the bound as well, attaining $\eff_T = O(\log^2 T), \saf_T = \widetilde{O}(\sqrt{T}),$ \emph{without this extra information}.

 \paragraph{A dual view, and our approach.} From an analytic point of view, the failure to improve on $\sqrt{T}$ bounds can be seen as a breaking down of the assertion that \emph{in polytopal domains, optimistic methods play on the finite set of extreme points of the polytope}. Indeed, in the SLB scenario, the polytope is not known, and these extreme points are effectively smeared out into sets of diameter $\Omega(t^{-\nicefrac12})$ due to estimation errors in $\hat{A}_t$. Thus, the primal approach to analysing polytopes breaks down.

 As described in \S\ref{sec:intro}, our resolution to this issue lies in the dual view of extreme points of a polytope as points that activate exactly $d$ independent constraints. Due to this, we can view optimism with known $\safetyset$ as activating some $d$ constraints of $\{Ax \le \alpha\}$. This view generalises: we show that there exists some $\tA \in \matconf_t$ such that under \algonamenospace, $x_t$ activates at least $d$ constraints of $\safetyset_{\tA} := \{\tA x \le \alpha\}.$ Naturally, such a set of constraints is a `poor' choice if saturating these constraints for $\{Ax \le \alpha\}$ yields poor or infeasible points. The key idea is that if $I$ is `poor', then the only way \algoname would prefer to activate noisy versions of the constraints in $I$ is if the noise-scale $\rho_t(x_t)$ is large. 
 
 This sets up a two-step attack to control $\eff_T$. First, we use the dual argument above to study a `combinatorial identification' question of whether \algoname finds the `right' set of constraints to saturate. This is addressed by developing new dual notions of gaps for sets of constraints, which arise by an approach reminiscent of the global sensitivity analysis of LPs \citep[][Ch.5]{bertsimas1997introduction}, and is the subject of \S\ref{sec:structural_behaviour}. Secondly, even if the `right' set of constraints are activated, \algoname may play ineffectively due to noisy estimation of $\tA$. Standard arguments (such as \S\ref{sec:poly_upper_bounds}) only yield a $\sqrt{T}$ control on this. Instead, we show that due to the optimism of \eqref{eq:action}, if $t \ge d$ then activating any `optimal' set of constraints yields $x_t : \ip{\theta, x_t - x^*} > 0$, which controls efficacy loss due to such play to $O(1).$ This argument is elementary, but involved, and entails a careful analysis of the structure of \eqref{eq:action} when optimal constraints are activated, as developed in \S\ref{section:regret}, and \S\ref{appx:optimal_BIS_is_good}. 

\section{Structural Behaviour of \algonamenospace, and Noise-Scale Lower Bounds}\label{sec:structural_behaviour}

We proceed to formally define basic index sets, as well as the gaps associated with these index sets, which lead lead to the key consequence that \algoname does not play `suboptimal' index sets too often.

\subsection{Basic Index Sets}\label{sec:BISs}

We begin by formalising `sets of constraints', and `activation' as mentioned in \S\ref{sec:lower_bound}. 

\begin{definition} An \emph{index set} $I$ is a subset of $[1:m].$ Such a set is $I$ is called a \emph{basic index set (BIS)} if $|I| = d$. The set of points that \emph{activate an index set} $I$ is defined as \( \mathcal{X}^I := \{ x \in \mathcal{S} : A(I) x = \alpha(I) \}.\)
\end{definition}
Notice that we demand that activating points are feasible, i.e., $\mathcal{X}^I \subset \safetyset.$ The set $\mathcal{X}^I$ may be empty, or a singleton, or an affine segment. We shall find the following linear-algebraic terminology useful.
 \begin{definition} A BIS $I$ is called $\mathrm{(i)}$ \emph{feasible} if $\mathcal{X}^I \neq \emptyset$ and \emph{infeasible} otherwise; $\mathrm{(ii)}$ {suboptimal} if $x^* \not\in \mathcal{X}^I$ and \emph{optimal} otherwise;  $\mathrm{(iii)}$ \emph{full rank} if the row vectors of $A(I)$ span $\mathbb{R}^d$.
\end{definition}

\begin{wrapfigure}[14]{r}{0.28\linewidth}\vspace{-2.5\baselineskip}
    \centering
    \includegraphics[width = \linewidth]{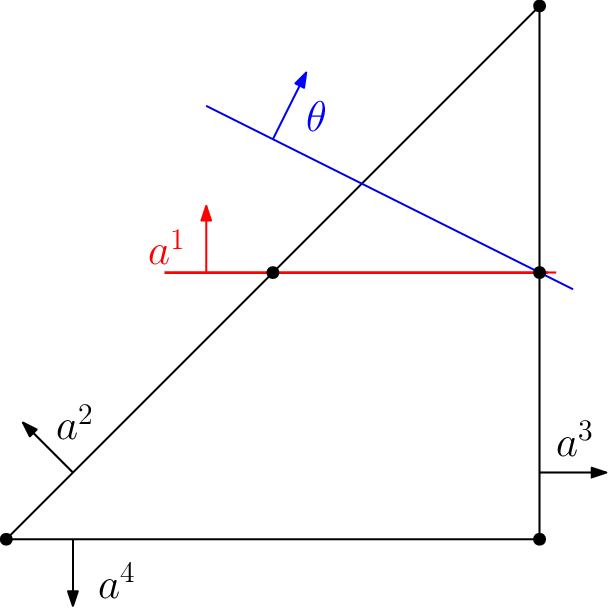}\vspace{-.5\baselineskip}
    \caption{\footnotesize Illustration of Ex.~\ref{example:triangle}. The black lines represent the known constraints, the red line is the unknown constraint, and the blue line is the locus of optimality.}
    \label{fig:triangle_example}
\end{wrapfigure}

$ $ \vspace{-1.5\baselineskip}

\begin{myexample}\label{example:triangle}
     To illustrate these definitions, consider the LP \begin{align*} \max  x_1 + 2x_2 :  \underbrace{x_2 \le 1/2}_{\textrm{unknown}}, \,\,\,\, \underbrace{x_1 \ge x_2, x_1 \le 1, x_2 \ge 0}_{\textrm{known}}.  \end{align*} Foregoing normalisation for clarity, we have $m =4, U = 1$ and the parameters $\theta = (1,2), a^1 = (0,1), a^2 = (-1,1), a^3 = (1,0), a^4 = (0, -1), \alpha = (0.5, 0,1,0)$. There are $\binom{4}{2} = 6$ basic index sets, \begin{align*} I_1 = \{1,2\}, I_2 = \{1,3\}, I_3 = \{1,4\}, \\ I_4 = \{2,3\}, I_5 = \{2,4\}, I_6 = \{3,4\}. \end{align*} Of these, $I_2$ is optimal, and the rest suboptimal, with $x^* = (1,\nicefrac12);$ $I_3$ is rank-deficient while the rest are full-rank;  $I_3$ and $I_4$ are infeasible, while the rest are feasible. 
\end{myexample}

\paragraph{Noisy Activation.} For SLBs, instead of the true constraint matrix $A$, \algoname  must work with noisy estimates of it, the $\tA$s. We extend the notion of BIS activation to handle this fuzziness in constraints. 
\begin{definition} \label{def:noisy_ass} The set of points that \emph{noisily activates} a BIS $I$ at time $t$ is \[ \widetilde{\mathcal{X}}_t^I := \{ x \in \permset_t: \exists \tamat \in \matconf_t, \tamat(I)x = \alpha(I)\}. \]
\end{definition}
Note that $\widetilde{\mathcal{X}}_t^I \subset \permset_t \subset \mathcal{X}$. The main structural result is the following observation. 
\begin{proposition}\label{prop:noisy_association} The actions of \algoname must noisily activate at least one BIS, i.e. $\forall t, \exists I_t : x_t \in \widetilde{\mathcal{X}}_t^{I_t}$.
\end{proposition}

If $x_t$ noisily activates the BIS $I$ at time $t$, we shall say that $I$ is \emph{played} at time $t$. Note that more than one BIS may be played at a time (since $x_t$ can lie in the intersection of many $\widetilde{\mathcal{X}}_t^I$s).

\subsection{Gaps of Suboptimal BISs}\label{sec:gaps} 

We argue that \emph{if \algoname noisily activates a suboptimal BIS at $t$, then the noise scale $\rho_t(x_t;\delta)$ must be large.}  To show this, we develop two \emph{gaps} for suboptimal BISs: the \emph{feasibility gap} and the \emph{efficacy gap}, which respectively exploit the permissibility and optimism of $x_t$. Our results will lower bound $\rho_t(x_t;\delta)$ by the \emph{larger} of these gaps when suboptimal BISs are played. The overall constructions are essentially via a reduction to global linear programming sensitivity analysis. This is necessary: since we do not know the constraints in $A$ or $\theta$, perturbations in this matrix (as represented by $\tamat$) may, and indeed do, cause the optimal $x^*$ to appear suboptimal.

The basic structure we use is the following localisation of $x_t$s played by \algonamenospace, proved in \S\ref{appx:local_box} as a simple consequence of Lemma~\ref{lem:noise_scale}. From here onwards, we shall just write $\rho_t$ instead of $\rho_t(x_t;\delta)$. \begin{lemma}\label{lemma:local_box}
    For $\zeta \in [0,\infty),$ define the \emph{activation polytope} of $I$ at scale $\zeta$ as \[\mathcal{T}(\zeta;I) := \{x : Ax \le \alpha + \zeta \unk , A(I) x \ge \alpha(I) - \zeta \unk(I) \}. \] If the confidence sets are consistent, and if the action of \algoname at time $t$, $x_t,$ noisily activates the BIS $I,$ then $x_t \in \mathcal{T}(\rho_t;I)$, and further, $\ip{\theta, x^* - x_t} \le \rho_t$. 
\end{lemma}

\subsubsection{Intuitive Illustration of Gaps}\label{sec:intuitive_gap}

\begin{wrapfigure}[19]{r}{0.35\linewidth}
    \centering
    \includegraphics[width = \linewidth]{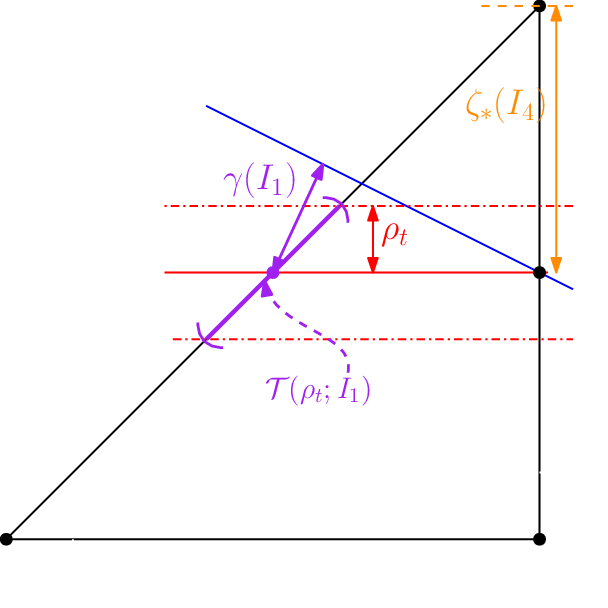}\vspace{-1.5\baselineskip}
    \caption{\footnotesize Illustration of gaps in Ex.~\ref{example:triangle}. $x^{I_1}$ is the purple dot, and the activation polytope $\mathcal{T}(\rho_t;I_1)$ is shown in purple, along with the separation $\gamma(I_1)$. The spread $\spread(I_1)$ is the inner product of the direction in which $\mathcal{T}$ varies and $\theta$. For $I_4,$ the feasibility gap $\zeta_*(I)$ is illustrated geometrically in orange.}
    \label{fig:triangle_example_gaps}
\end{wrapfigure}

To expose the key components that allow \algoname to control the play of suboptimal BISs, we will first consider the feasible, full-rank, and suboptimal BIS $I_1 = \{1,2\}$ in Ex.~\ref{example:triangle}. Due to the full-rank, the constraints of $I$ are activated by a unique point, $x^I$. Since $I$ is suboptimal, there is a positive `efficacy separation' between $x^I$ and $x^*,$ denoted $\gamma(I) := \ip{\theta, x^* - x^I}.$ In our example, $x^{I_1} = (\nicefrac12, \nicefrac12)$, and $\gamma(I_1) = \nicefrac12$.

\paragraph{Efficacy Gap.} Under noisy activation of $I$, the point $x_t$ may depart from $x^I$, but it cannot go too far. Indeed, by Lemma~\ref{lemma:local_box}, $x_t$ must lie in the activation polytope $\mathcal{T}(\rho_t;I),$ which is a skewed $\ell_\infty$-box of scale $\rho_t$ containing $x^I.$ In our example, $\mathcal{T}(\rho_t;I_1) = \{x : x_1 = x_2, x_2 \in \nicefrac12 \pm \rho_t\}.$ This localisation constrains how large $\ip{\theta, x_t}$ can be. Indeed, there exists a constant $\spread(I)$, which we call the spread of $I$, such that $\max_{x \in \mathcal{T}(\zeta;I)} \ip{\theta, x - x^I} \le \zeta \spread(I)$. In effect, $\spread(I)$ is a measure of how well the geometry induced by $I$ near $x^I$ aligns with $\theta$, e.g., for $I_1, \spread(I_1)$ is the inner product between $\theta$ and $(1,1),$ the direction along which $\mathcal{T}$ varies.

Thus, $\ip{\theta, x_t} \le \ip{\theta, x^I} + \rho_t \spread(I).$ But, since $\ip{\theta, x^I - x^*} = - \gamma(I)$, this implies $\ip{\theta, x_t} \le \ip{\theta, x^*} - \gamma(I) + \rho_t\spread(I)$. This lies in tension with Lemma~\ref{lemma:local_box}, which states that $\ip{\theta, x_t} \ge \ip{\theta, x^*} - \rho_t$. Resolving this tension yields the lower bound \( \rho_t \ge \eta_*(I) :=  \gamma(I)/(1+\spread(I)). \) We call the constant $\eta_*(I)$ the \emph{efficacy gap} of $I$. For Ex.~\ref{example:triangle}, $\eta_*(I_1) = \nicefrac18.$

\paragraph{Safety Gap.} It is also possible that $x_t$ noisily activates an infeasible BIS, such as $I_4 = \{2,3\}$ in Ex.~\ref{example:triangle}. In this case, a conflict arises between the inequalities defining the activation polytope $\mathcal{T}(\zeta;I)$: if $I$ is infeasible, then $\mathcal{T}(0;I) = \mathcal{X}^I = \{Ax \le \alpha, A(I) x \ge \alpha(I)\} = \emptyset$, and by right-continuity $\mathcal{T}(\zeta;I)$ is empty for small $\zeta$. Let us define $\zeta_*(I)$ to be the smallest scale at which $\mathcal{T}(\zeta;I)$ is nonempty. Since $x_t \in \mathcal{T}(\rho_t;I)$, it follows that if $x_t$ activates a BIS $I$, then $\rho_t \ge \zeta_*(I)$. We call $\zeta_*(I)$ the \emph{safety gap} of $I$. In Ex.~\ref{example:triangle}, $\mathcal{T}(\zeta;I_4) = \{x : x_1 =1, x_1 = x_2, x_2 \le \nicefrac{1}{2} +\zeta\},$ and so $\zeta_*(I_4) = \nicefrac12.$

\paragraph{Summary.} The above illustrates two basic tensions in selecting suboptimal BISs. If a BIS $I$ is infeasible, then activating it requires that $\rho_t$ dominates its safety gap, and if $I$ is feasible but suboptimal, then activation requires that $\rho_t$ exceeds its efficacy gap. We formalise this concept below.

\subsubsection{Formal Definitions of the Gaps}\label{sec:formal_gap}

We give a unified treatment of the safety and efficacy gaps by analysing a parameterised LP with feasible set determined by the local structure induced by a BIS $I$, as encapsulated in Lemma~\ref{lemma:local_box}. \begin{definition}
    For a BIS $I$, and $\zeta \ge 0,$ the \emph{optimistic LP} at scale $\zeta$ induced by $I$ is defined as \( P(\zeta;I) := \sup \{ \ip{\theta,x} : x \in \mathcal{T}(\zeta;I)\}, \) with the convention that $\sup \emptyset = -\infty$.
\end{definition}
Since by Lemma~\ref{lemma:local_box}, $x_t$ lies in $\mathcal{T}(\rho_t;I)$ if it noisily activates $I$, this yields $\ip{\theta, x_t} \le P(\rho_t;I).$ So, the behaviour of $P(\zeta;I)$ with $\zeta$ let us capture the tensions we illustrated in the previous section. 
\begin{definition}
    We define the \textbf{feasibility gap} of a BIS $I$ as \[ \zeta_*(I) := \inf\{\zeta \ge 0: P(\zeta;I) > -\infty\}. \] We define the \emph{efficacy separation} of $I$ as $\gamma(I) := \ip{\theta, x^*} - P(\zeta_*(I);I),$ and the \emph{spread} of $I$ as \( \spread(I) := \inf\{ C : \forall \zeta \ge \zeta_*(I), P(\zeta;I) \le P(\zeta_*(I)) + C(\zeta -\zeta_*(I)),\) which yield the \textbf{efficacy gap} of $I$, \[\eta_*(I) = \frac{\gamma(I) + \zeta_*(I) \spread(I)}{1 + \spread(I)}. \] 
\end{definition}
The definitions above concretise the quantities described in \S\ref{sec:intuitive_gap}. The key consequence of these definitions is the following `noise-scale lower bound on activating poor BISs,' shown in \S\ref{appx:gaps_make_sense_proofs}.
\begin{lemma}\label{lem:spread_is_finite_AND_noise_scale_lower_bound}
    For any suboptimal BIS $I$, $\max(\zeta_*(I), \eta_*(I)) > 0$. Further, under consistency of the confidence sets, if $x_t$ activates a suboptimal BIS $I$, then $\rho_t(x_t;\delta) \ge \max(\zeta_*(I), \eta_*(I))$. 
\end{lemma}
Note here that the noise-scale needed to play $I$ is driven by the \emph{larger} of the efficacy and safety gap at $I$. This is natural: these quantities measure the `extent' of infeasibility or inefficacy of $I$, and thus the larger one determines the rate at which evidence of the suboptimality of $I$ is accumulated.

\subsection{Gap of the Problem, and Controlling the Play of Suboptimal BISs}\label{section:control_on_play_of_bad_BISs}

In light of Lemma~\ref{lem:spread_is_finite_AND_noise_scale_lower_bound}, the following is natural.
\begin{definition}
    The \textbf{gap of an SLB instance} is defined as $\Gamma:= \min_I \max(\zeta_*(I), \eta_*(I))$. 
\end{definition}
The main  result of this section shows that $\Gamma^{-2}$ bounds how often suboptimal BISs are played. 
\begin{theorem}\label{thm:number_of_bad_bfs}
    Let $\{x_t\}$ denote the actions of \algonameparam on an SLB instance. Then, with probability at least $1-\delta,$ if at any time $t, x_t$ noisily activates a suboptimal BIS, then $\rho_t(x_t;\delta) > \Gamma$. Further, the total number of times suboptimal BISs are played is bounded as \[ \sum_{t} \indi\{ \exists \textrm{suboptimal BIS } I : x_t \in \widetilde{\mathcal{X}}_t^I\} = O\left( \Gamma^{-2} \left(d^2 \log^2T + d\log(T) \log(U/\delta)\right)\right).\]
\end{theorem}
This result, shown in \S\ref{appx:suboptimal_BIS_cannot_be_played_too_often}, implies that most of the time, \algoname plays actions such that the noisy constraints they activate are precisely those that $x^*$ saturates. In other words, while the method may not be able to locate $x^*$ itself with precision better than $O(1/\sqrt{t}),$ it can identify the binding constraints, and,  most of the time, the actions of \algoname focus on activating these constraints.

\section{Controlling Efficacy Regret and Total Safety Violation}\label{section:regret}

We now come to the main results of the paper. The previous section tells us that suboptimal BISs cannot be played too often, effectively controlling a `dual' type of regret. We proceed to translate these results into bounds on the `primal' quantities $\eff_T$ and $\saf_T$. This requires us to account for the times when only optimal BISs ($I$ such that $x^* \in I$) are played. We can control the behaviour of such times under the following weak nondegeneracy condition at the optimum.
\begin{assumption}\label{assumption:non_deg}
    Every optimal BIS (i.e., $I : x^* \in \mathcal{X}^I$) is full-rank. Further, the noise $w_t^S$ is generic in the sense that the probability that it lies in any subspace of less than $d$ dimensions is zero.
\end{assumption}

Note that the condition does not require the uniqueness of the optimum. Instead, nondegeneracy is demanded in the sense that any size $d$ subset of all the constraints that $x^*$ saturates constitutes a full rank BIS. The effect of this is to mainly exclude pathologies, such as the case in $\mathbb{R}^2$ where two identical constraints are placed on the system, and both pass through the optimum (i.e., $(a^i, \alpha^i) = c (a^j, \alpha^j)$ for some pair $i,j$). Notice that in standard linear programmming, such constraints would be eliminated during pre-processing, which we cannot do since we do not know all of the constraints. Nevertheless, since the constraints represent limitations on different safety scores, it is unlikely in practice that these would be linearly dependent. Further, note that Assumption \ref{assumption:non_deg} allows $x^*$ to be degenerate in the sense that it may lie on many more than $d$ constraints. Of course, the genericity of noise is a standard condition, and can be met by adding an arbitrarily small continuous noise to the feedback. The main utility of this assumption is the following result, which is argued in \S\ref{appx:optimal_BIS_is_good}.

\begin{lemma}\label{lemma:optimally_associated_BIS_are_good}
    Under assumption \ref{assumption:non_deg}, if the confidence sets are consistent, $t \ge d+1,$ and the action $x_t$ of \algonameparam is that $x_t$ only noisily activates the optimal BIS, then $\ip{\theta,x_t} \ge \ip{\theta, x^*}$. 
\end{lemma}
In other words, when only the optimal BISs are played, the action $x_t$ cannot be ineffective! The proof relies on using the optimal BIS $I$ to construct a `localised' program that the solutions $(\ttheta_t,x_t)$ and witness $\tA_t$ of \eqref{eq:action} must also optimise. The assumption is used to make this part effective, and in general the same holds if $\theta \in \textrm{row-span}(A(I))$. The final statement then follows through an elementary, but involved, analysis of structure of optimal solutions of this localised program.

Coupling the above with Theorem~\ref{thm:number_of_bad_bfs} yields our main result, shown in \S\ref{appx:main_theorem_proof} \begin{theorem}\label{thm:main_regret_bound}
    Under assumption \ref{assumption:non_deg}, w.p.~$\ge 1-\delta$, the actions of \algonameparam yield \[ \eff_T = O\left( \Gamma^{-1} (d^2 \log ^2 T + d \log T \log(U/\delta)) \right), \textit{ and } \saf_T = \widetilde{O}\left(\sqrt{d^2T (\log^2 T + \log T \log(U/\delta) )}\right).\]
\end{theorem}
In light of Theorem~\ref{thm:unslacked_regret_lower_bound}, we see that up to polylog factors, \algoname saturates the lower bound, with a bias towards minimising the efficacy regret. While the gain in efficacy performance over PO methods is evident, we again stress the advantage in terms of the lack of prior knowledge of a safe ball in $\mathcal{X}.$ We further note that the costs scale logarithmically with the number of unknown constraints, $U$.

\paragraph{Tightness of Dependence on $\Gamma$.} Exploiting a subtle reduction of safe Multi-Armed Bandits problems to SLB problems, we show in \S\ref{appx:logarithmic_lower_bound} that the inverse dependence on $\Gamma$ is necessary. \begin{theorem}
\label{lowerpd} 
Fix a $c \in (0,1)$. For any $\Gamma \le \nicefrac1{16},$ and any method that ensures that in every SLB instance, $\max(\eff_T, \saf_T) = O(T^{1-c}),$ there exists an instance of the SLB problem with gap at least $\Gamma,$ such that \( \liminf \frac{\max(\mathbb{E}[\eff_T], \mathbb{E}[\saf_T])}{\log T} \ge \nicefrac{c}{108}\cdot \Gamma^{-1}.\)
\end{theorem}

\subsection{Improved Safety Performance Under Tolerance}\label{sec:polylog_safety}

While Theorem~\ref{thm:main_regret_bound} is tight in terms of $\saf_T$, given that it achieves polylogarithmic $\eff_T$, the polynomial dependence can nevertheless be considered prohibitive. To improve upon this, we study three concrete scenarios in which this dependence may be improved. At the core, each of these cases relaxes the SLB problem so that the precision barrier discussed in \S\ref{sec:lower_bound} does not arise, thus illustrating that this condition is the sole obstruction to polylogarithmic control on $\saf_T$.

\paragraph{Finite Precision Slack in Constraint Levels.} As a first pass, we may allow for a finite amount of violation of constraints without any penalty, e.g., through the $\varepsilon$-precision metric $\saf_T^\varepsilon := \sum_{t \le T} \max_i (\ip{a^i,x} - \alpha^i)_+ \indi\{ \exists i: \ip{a^i,x} - \alpha^i > \varepsilon\}.$ Such a relaxation is quite pertinent in scenarios such as drug trials or engineering design applications (where $\varepsilon$ can be set to a small factor of $\alpha^i$) or if the $\alpha^i$ are estimated values\footnote{this is quite common: process and measurement variations mean that an exact threshold for the quality of components necessary to ensure safe behaviour is not known, and must usually be fixed empirically.} (where $\varepsilon$ can be the error level in these estimates). In this context, we show in \S\ref{appx:finite_precision_levels} that \begin{theorem}\label{thm:finite_precision_in_levels}
    With probability at least $1-\delta,$ \algonameparam ensures that \emph{simultaneously for every $\varepsilon > 0$} \[ \eff_T = O\left( \Gamma^{-1} {d^2 \log^2 T}\right) \quad \textit{ and } \quad \saf_T = O\left( \varepsilon^{-1} d^2 \log^2 T \right). \] 
\end{theorem} 
The main point of interest in the result above is that it holds \emph{simultaneously} for every value of $\varepsilon$. Indeed, \algoname does not need $\varepsilon$ as a parameter, and it only arises in the analysis. This means that the method adapts to the precision requirements of the domain at hand.  Note further that setting $\varepsilon = T^{-c}$ for $c > \nicefrac12$ yields $\sum_{t \le T} (\max_i \ip{a^i,x_t} - \alpha^i - T^{-c})_+ = \smash{\widetilde{O}(T^{1-c})},$ i.e., as $T \nearrow \infty,$ \algoname rapidly converges towards feasibility, and gains over Theorem~\ref{thm:main_regret_bound} are realised with decaying precision slack. 

\paragraph{Finite Precision in Constraint Parameters.} Rather than treating the precision in the constraint levels, it may be possible that the constraint parameters are restricted to a finite grid. Generically, such a structure arises in settings modeled as integer programs (up to a unit factor), and particular examples include drug discovery \citep[e.g.][]{radhakrishnan2008optimal}, where constraints indicate requirements that a compound binds to certain receptors, and so are naturally binary. We can formalise this by specifying a finite set $\mathsf{P}$ which describes the `grid' that $A$ must lie in. Naturally, we can modify \algoname to exploit this by restricting the construction of $\permset_t(\delta)$ in \eqref{eq:permissible_set} to $\tA \in \matconf_t^{\mathsf{P}} = \matconf_t \cap \mathsf{P}.$ We argue in \S\ref{appx:finite_precision_constraints} that this change implicitly introduces a finite set of possible actions when only optimal BISs are activated, which in turn yields the following result. 

\begin{theorem}\label{thm:finite_precision_in_constraints}
    If the constraint parameters lie in a finite precision set, then there exists a constant $\pi > 0$ such that w.p.~$\ge 1-\delta,$ the actions of \algonameparam satisfy \( \max(\eff_T, \saf_T) = O( \min(\Gamma,\pi)^{-1} d^2 \log^2 T).\) 
\end{theorem} 

\paragraph{Finite Action Spaces.} Finally, if we instead consider the commonly studied case of only having a finite number of possible actions \citep{abbasi2011improved, dani2008stochastic, agrawal2016linear}, then the issues of primal precision do not arise, since we do not need to exactly know the constraints in order to exactly locate any action. If we simply define $\Delta = \min_{\mathcal{X}} \max( \ip{\theta, x^* - x}, \max_i (\ip{a^i,x} - \alpha^i)_+ ),$ then merely employing the techniques described in \S\ref{sec:poly_upper_bounds} yields (see\S\ref{appx:finite_arm}) \begin{proposition}\label{prop:finite_action_regret_bound}
    Over finite actions spaces, with probability at least $1-\delta,$ the actions of \algonameparam ensure that $\max(\eff_T, \saf_T) = O(\Delta^{-1} d^2 \log^2 T).$ 
\end{proposition}

\section{Simulations}\label{section:experiments}
We verify the theoretical study above with simulations over Example~\ref{example:triangle}, and study the relative performance of \algoname and the optimistic-pessimistic method Safe-LTS \cite{moradipari2021safe}. These implementations are based on the following relaxation of Algorithm \ref{alg:main_scheme}.

\paragraph{Computationally Feasible Relaxation.} A well-known barrier to implementing Algorithm \ref{alg:main_scheme} is that even if all constraints were known, the program (\ref{eq:action}) is non-convex \citep{dani2008stochastic}. In our case, this is further complicated by the fact that the set $\permset_t$ needs to be determined. Following \citet{dani2008stochastic}, we approach these issues by constructing \emph{box confidence sets}, i.e., 
\[ \matconf_{t, 1} := \{ \tilde{A}: \forall i, \|(\tilde{a}^i - \hat{a}^i) V_t^{1/2}\|_1 \le \sqrt{ d\beta_{t}}\}. \]
Since $\|\cdot\|_2 \le \|\cdot\|_1 \le \sqrt{d}\|\cdot\|_2, \matconf_{t,1} \subset \matconf_t.$ Further, due to the same equivalence, the $\ell_2$-based analysis persists, up to a blowup of $\sqrt{d}$ in $\rho_t$, and thus running \textsc{doss} with $\matconf_{t,1}$ worsens our bounds from $(d^2 \log^2 T, \sqrt{d^2 T})$ to $(d^3 \log^2 T, \sqrt{d^3 T})$. 

The main advantage of $\matconf_{t,1}$ lies in the fact that the box-confidence sets are polytopes. Due to this, the $\tA_t$ that are active for the optimistic action $x_t$ must lie at the extreme points of these sets. Since each set has only $2d$ extreme points, this allows us to determine $x_t$ by solving $(2d)^{U+1}$ convex programs, which is computationally feasible so long as $U$ is small. Of course, this complexity remains painfully slow as $U$ grows. Finding versions of \textsc{doss} that are computationally practical for a large number of unknown constraints remains an interesting open problem.

\paragraph{Setting.} We implement \algoname on with the $L_\infty$ relaxation above on the instance of Example~\ref{example:triangle} over the horizon $T = 10^4,$ and with the parameters $\lambda = 2, \delta = 1/(4T) = 2.5 \times 10^{-5}.$ The noise in observations is independent and Gaussian, with variance $0.1$. Notice that for this instance, $\Gamma = \nicefrac18$. 

\paragraph{Behaviour of \textsc{doss}.} Our main observation is that \emph{\algoname is very effective, and has well-controlled violations.} Figure~\ref{fig:eff_and_saf} shows the efficacy regret $\eff_t$ and both the arbitrary precision safety violations $\saf_t$ and the finite precision safety violations $\saf_t^{\varepsilon}$ for the value $\varepsilon = 0.05 = 2\Gamma/5$. The simulations validate our main claims of strong efficacy regret control, and well-behaved growth of safety violations. Indeed, observe that the efficacy regret is essentially zero over most of the runs (with rare runs rising to $\eff_{10^4} \approx 100$). This property arises since \algoname very rarely plays suboptimal BISs (see the following discussion and Figure~\ref{fig:suboptimal_BISs}), and when it plays the optimal BIS, it plays a `over-efficient' but unsafe point. Further, the extent of the lack of safety of the actions chosen by \algoname is well-controlled, as seen in the behaviour of $\saf_T$. The finite precision regret shows even stronger control, with growth essentially halted at $t \approx 5000,$ validating the analysis underlying Theorem~\ref{thm:main_regret_bound}.

\begin{figure}[t]
\centering

  \includegraphics[width=.45\linewidth]{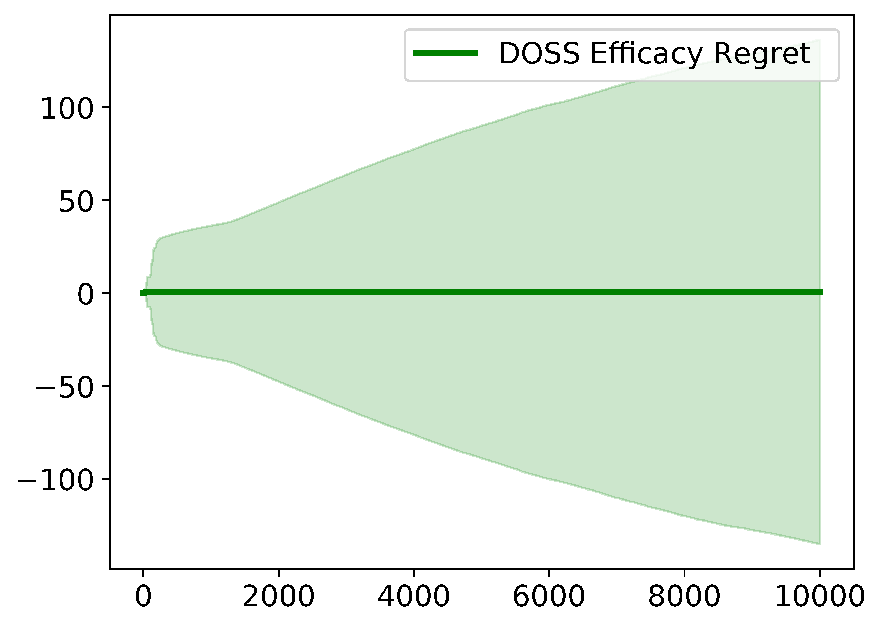}~
  \includegraphics[width=.45\linewidth]{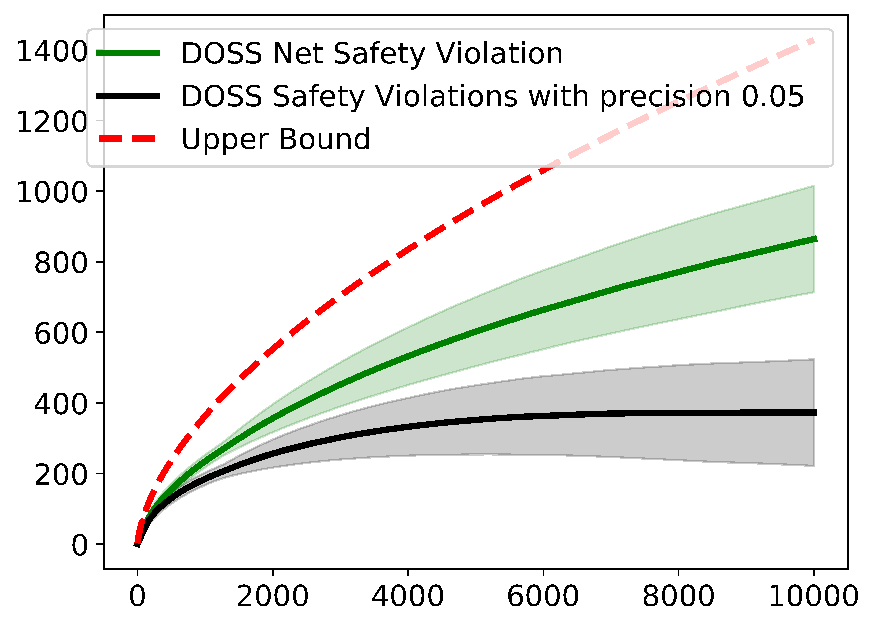}\vspace{-\baselineskip}

\caption{\footnotesize Efficacy Regret and Safety Violation of \algoname. We plot averages and one standard deviation confidence regions over 30 runs for $\eff_T$ (left) and both $\saf_t$ and $\saf_t^{0.05}$ (right). We also plot the upper bounds we show in the latter to contextualise the observations. Observe that the efficacy regret is marginal: the mean is essentially $0,$ and the variance limited. Further, observe that the growth of the net safety violation $\saf_t$ is well-controlled, and lies far below the bounds of \S\ref{section:regret}. Further, the finite precision violations show a strong flattening, as is expected from Theorem~\ref{thm:main_regret_bound}.}
\label{fig:eff_and_saf}\vspace{-\baselineskip}
\end{figure}

\paragraph{\algoname rarely activates suboptimal index sets.} In Figure~\ref{fig:suboptimal_BISs}, we plot the number of times that \algoname noisily activates a suboptimal BIS, i.e., any index set other than $I_2 = \{1,3\}$. The main observation is that this occurs very rarely: indeed, over the horizon of $10^4$, most runs do not activate suboptimal BISs more than 100 times. This is far below the upper bound of Theorem~\ref{thm:number_of_bad_bfs}. 

\begin{figure}[t]
\centering
  \includegraphics[width=.5\linewidth]{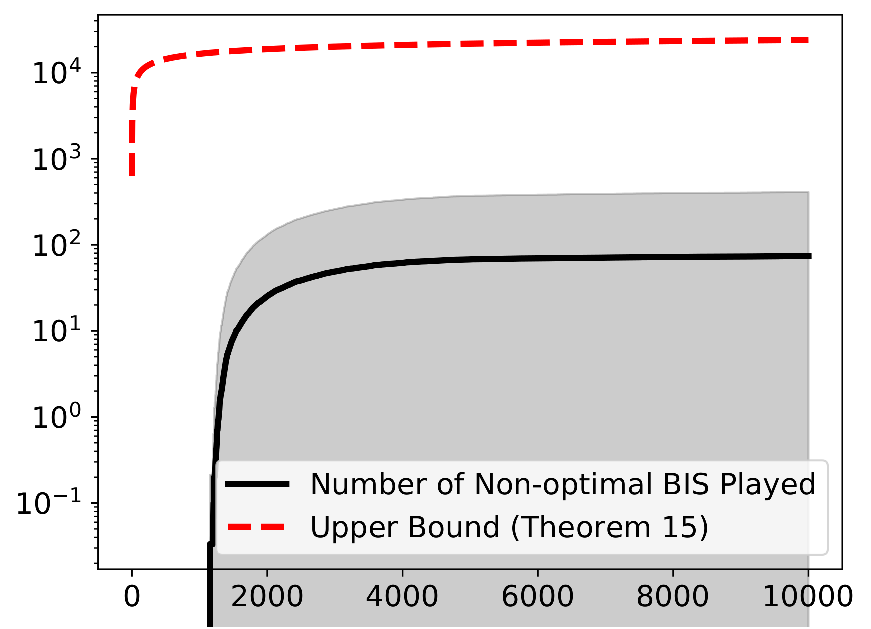}
  \caption{\footnotesize Suboptimal BIS activation by \algoname in the instance of Example~\ref{example:triangle}. Observe that such activation is very rare, typically far less than $1\%$ of the times, and the growth is essentially flat.}
  \label{fig:suboptimal_BISs}
\end{figure}

\paragraph{\algoname Compares Favourably with Pessimistic-Optimistic Methods.}

To contextualise our method, we also implement the PO-method \emph{safe-LTS} due to \cite{moradipari2021safe} in the instance of Example~\ref{example:triangle}. Instead of the optimistic permissible set $\permset$, safe-LTS constructs a pessimistic set $\Pi_t = \{x : \forall \tA \in \matconf_t, \tA x \le \alpha\}$.  Note that with high probability, all points in $\Pi_t$ must be safe. The method then selects actions optimistically, in this case by exploiting Thompson sampling. Naturally, this method requires the knowledge of a safe point with margin to being with, and we supply the point $\xsafe = (0,0)$ to the method, which has the (large) margin $\msafe = \nicefrac 12$.

\begin{figure}[t]
\centering
  \includegraphics[width=.45\linewidth]{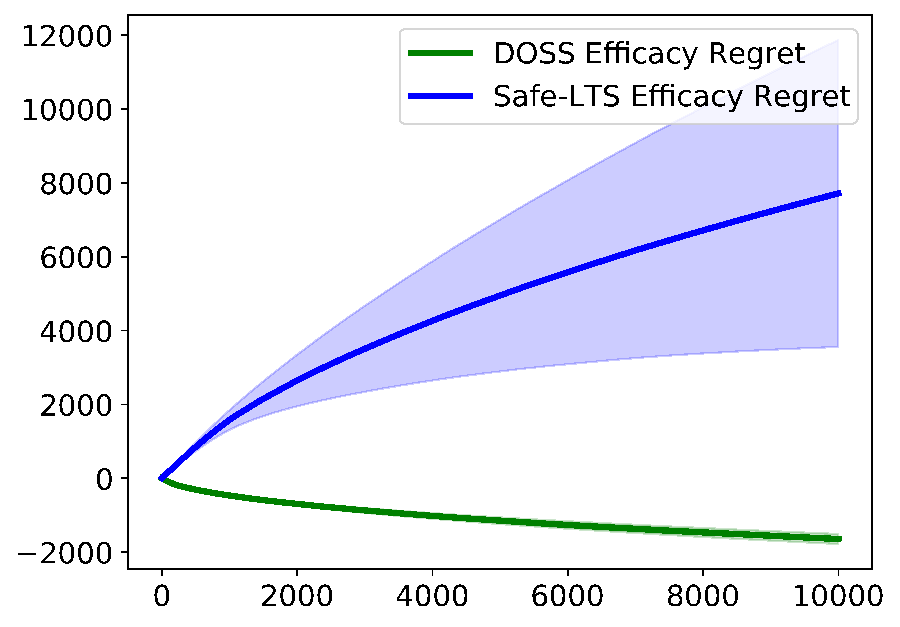}
  \includegraphics[width=.45\linewidth]{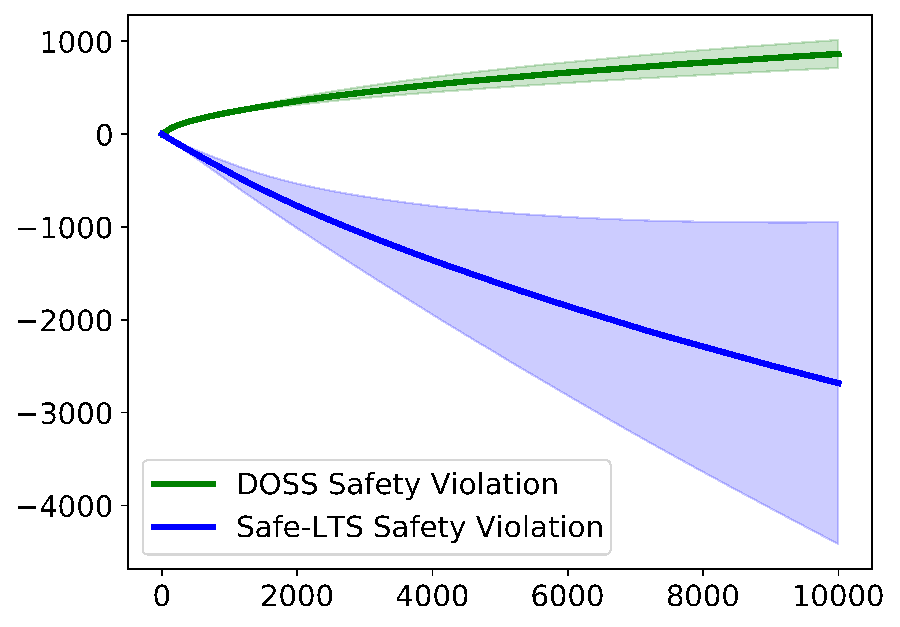}
\caption{\footnotesize Comparing the behaviour of \algoname and safe-LTS on the instance of Example~\ref{example:triangle}. The left plot shows the \emph{raw} efficacy regret, while the right plot is the \emph{raw} safety violations of the two methods, and each reports means and one-standard deviation confidence regions over 30 runs.. Observe that the efficacy performance of safe-LTS is extremely poor, indicating that the algorithm is far from the boundary of the safe set $\safetyset$ for most of its runs. In contrast, the violation properties of \algoname are well-controlled, and almost four times smaller than the efficacy regret of safe-LTS.}
\label{fig:dovslts}
\end{figure}

Figure~\ref{fig:dovslts} compares the behaviour of the \emph{raw efficacy regret} $\sum \ip{\theta, x^* -x_t}$ (left) and the \emph{raw safety violation} $\sum \max_i (\ip{a^i, x_t} - \alpha^i)$ (right) of Safe-LTS and \algoname (since the efficacy regret of \algoname, and the safety-violations of safe-LTS are both essentially $0$, the raw behaviour elucidates more insight). As expected, safe-LTS suffers from $0$ safety regret, since it plays in a pessimistic set. However, this is accompanied by a large efficacy regret, with the mean of over $7000$ at the horizon $T = 10^4$. This arises due to the extreme conservatism of this method, which is evident from its safety violation property: the method has a strong negative (and decreasing still) violation, indicating that it continues to play deep in the interior of the domain for large $T$. Indeed, since over the domain, $\ip{a,x} -\alpha \in [-0.5, 0.5],$ and since the violation at $T = 10^4$ is roughly $-3000,$ this indicates that with a nontrivial probability, the method remains at least $0.25$-separated from the boundary of the safe set.

In comparison, observe that the raw efficacy regret of \algoname is negative, but not nearly as far as the violations of safe-LTS. This indicates that the method is shrinking towards the boundary of the safe set at a much better rate. Of course, this property is similarly illustrated by the violation behaviour: this nearly four times smaller than the efficacy regret of safe-LTS, and concentrates strongly to $\approx 800$ at $T = 10^4$.

\section{Discussion}

The SLB problem is inherently challenging due to the roundwise enforcement of constraints. Our works offers new, and refined insights into both the hardness of the problem through our instance-dependent superlogarithmic lower bound, and to the effectiveness of doubly-optimistic methods for the same through our strong control on $\eff_T.$ In the process, we developed a new dual viewpoint of the SLB problem, by developing gaps for \emph{sets} of constraints, which we believe is a conceptually important tool for such problems. Of course, a number of interesting questions remain open, e.g., are there computationally efficient ways to implement doubly-optimistic strategies for large $U$; or if one can design methods that attain the strong safety guarantees of PO methods, but without making the strong assumptions of prior knowledge of safe points. We believe that tackling these challenges is key to the effective use of bandit feedback in practical scenarios.

\paragraph{Acknowledgements.} {We acknowledge support by the Air Force Research Laboratory grant FA8650-22-C1039, Army Research Office grant W911NF2110246, and the National Science Foundation grants CCF-2007350 and CCF-1955981.}

\bibliographystyle{plainnat}
\bibpunct{(}{)}{;}{a}{,}{,}
\bibliography{Safe_Linear_Bandits}

\begin{thebibliography}{32}
\providecommand{\natexlab}[1]{#1}
\providecommand{\url}[1]{\texttt{#1}}
\expandafter\ifx\csname urlstyle\endcsname\relax
  \providecommand{\doi}[1]{doi: #1}\else
  \providecommand{\doi}{doi: \begingroup \urlstyle{rm}\Url}\fi

\bibitem[Abbasi-Yadkori et~al.(2011)Abbasi-Yadkori, P{\'a}l, and Szepesv{\'a}ri]{abbasi2011improved}
Yasin Abbasi-Yadkori, D{\'a}vid P{\'a}l, and Csaba Szepesv{\'a}ri.
\newblock Improved algorithms for linear stochastic bandits.
\newblock \emph{Advances in neural information processing systems}, 24:\penalty0 2312--2320, 2011.

\bibitem[Afsharrad et~al.(2023)Afsharrad, Moradipari, and Lall]{afsharrad2023convex}
Amirhossein Afsharrad, Ahmadreza Moradipari, and Sanjay Lall.
\newblock Convex methods for constrained linear bandits.
\newblock \emph{arXiv preprint arXiv:2311.04338}, 2023.

\bibitem[Agrawal and Devanur(2016)]{agrawal2016linear}
Shipra Agrawal and Nikhil Devanur.
\newblock Linear contextual bandits with knapsacks.
\newblock \emph{Advances in Neural Information Processing Systems}, 29:\penalty0 3450--3458, 2016.

\bibitem[Agrawal and Devanur(2014)]{agrawal2014bandits}
Shipra Agrawal and Nikhil~R Devanur.
\newblock Bandits with concave rewards and convex knapsacks.
\newblock In \emph{Proceedings of the fifteenth ACM conference on Economics and computation}, pages 989--1006, 2014.

\bibitem[Agrawal et~al.(2016)Agrawal, Devanur, and Li]{agrawal2016efficient}
Shipra Agrawal, Nikhil~R Devanur, and Lihong Li.
\newblock An efficient algorithm for contextual bandits with knapsacks, and an extension to concave objectives.
\newblock In \emph{Conference on Learning Theory}, pages 4--18. PMLR, 2016.

\bibitem[Amani et~al.(2019)Amani, Alizadeh, and Thrampoulidis]{amani2019linear}
Sanae Amani, Mahnoosh Alizadeh, and Christos Thrampoulidis.
\newblock Linear stochastic bandits under safety constraints.
\newblock \emph{arXiv preprint arXiv:1908.05814}, 2019.

\bibitem[Badanidiyuru et~al.(2013)Badanidiyuru, Kleinberg, and Slivkins]{badanidiyuru2013bandits}
Ashwinkumar Badanidiyuru, Robert Kleinberg, and Aleksandrs Slivkins.
\newblock Bandits with knapsacks.
\newblock In \emph{2013 IEEE 54th Annual Symposium on Foundations of Computer Science}, pages 207--216. IEEE, 2013.

\bibitem[Badanidiyuru et~al.(2014)Badanidiyuru, Langford, and Slivkins]{badanidiyuru2014resourceful}
Ashwinkumar Badanidiyuru, John Langford, and Aleksandrs Slivkins.
\newblock Resourceful contextual bandits.
\newblock In \emph{Conference on Learning Theory}, pages 1109--1134. PMLR, 2014.

\bibitem[Bernasconi et~al.(2022)Bernasconi, Cacciamani, Castiglioni, Marchesi, Gatti, and Trov{\`o}]{bernasconi2022safe}
Martino Bernasconi, Federico Cacciamani, Matteo Castiglioni, Alberto Marchesi, Nicola Gatti, and Francesco Trov{\`o}.
\newblock Safe learning in tree-form sequential decision making: Handling hard and soft constraints.
\newblock In \emph{International Conference on Machine Learning}, pages 1854--1873. PMLR, 2022.

\bibitem[Bertsimas and Tsitsiklis(1997)]{bertsimas1997introduction}
Dimitris Bertsimas and John~N Tsitsiklis.
\newblock \emph{Introduction to linear optimization}, volume~6.
\newblock Athena scientific Belmont, MA, 1997.

\bibitem[Camilleri et~al.(2022)Camilleri, Wagenmaker, Morgenstern, Jain, and Jamieson]{camilleri2022active}
Romain Camilleri, Andrew Wagenmaker, Jamie Morgenstern, Lalit Jain, and Kevin Jamieson.
\newblock Active learning with safety constraints.
\newblock \emph{arXiv preprint arXiv:2206.11183}, 2022.

\bibitem[Carlsson et~al.(2023)Carlsson, Basu, Johansson, and Dubhashi]{carlsson2023pure}
Emil Carlsson, Debabrota Basu, Fredrik~D Johansson, and Devdatt Dubhashi.
\newblock Pure exploration in bandits with linear constraints.
\newblock \emph{arXiv preprint arXiv:2306.12774}, 2023.

\bibitem[Chen et~al.(2022)Chen, Gangrade, and Saligrama]{chen2022strategies}
Tianrui Chen, Aditya Gangrade, and Venkatesh Saligrama.
\newblock Strategies for safe multi-armed bandits with logarithmic regret and risk.
\newblock In \emph{Proceedings of the 39th International Conference on Machine Learning}, pages 3123--3148, 2022.

\bibitem[Dani et~al.(2008)Dani, Hayes, and Kakade]{dani2008stochastic}
Varsha Dani, Thomas~P Hayes, and Sham~M Kakade.
\newblock Stochastic linear optimization under bandit feedback.
\newblock In \emph{Conference on Learning Theory}, 2008.

\bibitem[Efroni et~al.(2020)Efroni, Mannor, and Pirotta]{efroni2020exploration}
Yonathan Efroni, Shie Mannor, and Matteo Pirotta.
\newblock Exploration-exploitation in constrained mdps.
\newblock \emph{arXiv preprint arXiv:2003.02189}, 2020.

\bibitem[Gales et~al.(2022)Gales, Sethuraman, and Jun]{gales2022norm}
Spencer~B Gales, Sunder Sethuraman, and Kwang-Sung Jun.
\newblock Norm-agnostic linear bandits.
\newblock In \emph{International Conference on Artificial Intelligence and Statistics}, pages 73--91. PMLR, 2022.

\bibitem[Hutchinson et~al.(2023)Hutchinson, Turan, and Alizadeh]{hutchinson2023impact}
Spencer Hutchinson, Berkay Turan, and Mahnoosh Alizadeh.
\newblock The impact of the geometric properties of the constraint set in safe optimization with bandit feedback.
\newblock In \emph{Learning for Dynamics and Control Conference}, pages 497--508. PMLR, 2023.

\bibitem[Katz-Samuels and Scott(2019)]{katz2019top}
Julian Katz-Samuels and Clayton Scott.
\newblock Top feasible arm identification.
\newblock In \emph{The 22nd International Conference on Artificial Intelligence and Statistics}, pages 1593--1601. PMLR, 2019.

\bibitem[Lattimore and Szepesv{\'a}ri(2020)]{lattimore2020bandit}
Tor Lattimore and Csaba Szepesv{\'a}ri.
\newblock \emph{Bandit algorithms}.
\newblock Cambridge University Press, 2020.

\bibitem[Liu et~al.(2021)Liu, Li, Shi, and Ying]{liu2021efficient}
Xin Liu, Bin Li, Pengyi Shi, and Lei Ying.
\newblock An efficient pessimistic-optimistic algorithm for stochastic linear bandits with general constraints.
\newblock \emph{Advances in Neural Information Processing Systems}, 34:\penalty0 24075--24086, 2021.

\bibitem[Moradipari et~al.(2021)Moradipari, Amani, Alizadeh, and Thrampoulidis]{moradipari2021safe}
Ahmadreza Moradipari, Sanae Amani, Mahnoosh Alizadeh, and Christos Thrampoulidis.
\newblock Safe linear thompson sampling with side information.
\newblock \emph{IEEE Transactions on Signal Processing}, 2021.

\bibitem[Pacchiano et~al.(2021)Pacchiano, Ghavamzadeh, Bartlett, and Jiang]{pacchiano2021stochastic}
Aldo Pacchiano, Mohammad Ghavamzadeh, Peter Bartlett, and Heinrich Jiang.
\newblock Stochastic bandits with linear constraints.
\newblock In \emph{International Conference on Artificial Intelligence and Statistics}, pages 2827--2835. PMLR, 2021.

\bibitem[Pacchiano et~al.(2024)Pacchiano, Ghavamzadeh, and Bartlett]{pacchiano2024contextual}
Aldo Pacchiano, Mohammad Ghavamzadeh, and Peter Bartlett.
\newblock Contextual bandits with stage-wise constraints.
\newblock \emph{arXiv preprint arXiv:2401.08016}, 2024.

\bibitem[Radhakrishnan and Tidor(2008)]{radhakrishnan2008optimal}
Mala~L Radhakrishnan and Bruce Tidor.
\newblock Optimal drug cocktail design: methods for targeting molecular ensembles and insights from theoretical model systems.
\newblock \emph{Journal of chemical information and modeling}, 48\penalty0 (5):\penalty0 1055--1073, 2008.

\bibitem[Shamir(2015)]{shamir2015complexity}
Ohad Shamir.
\newblock On the complexity of bandit linear optimization.
\newblock In \emph{Conference on Learning Theory}, pages 1523--1551. PMLR, 2015.

\bibitem[Shao et~al.(2018)Shao, Yu, King, and Lyu]{shao2018almost}
Han Shao, Xiaotian Yu, Irwin King, and Michael~R Lyu.
\newblock Almost optimal algorithms for linear stochastic bandits with heavy-tailed payoffs.
\newblock \emph{Advances in Neural Information Processing Systems}, 31, 2018.

\bibitem[Turchetta et~al.(2016)Turchetta, Berkenkamp, and Krause]{turchetta2016safe}
Matteo Turchetta, Felix Berkenkamp, and Andreas Krause.
\newblock Safe exploration in finite markov decision processes with gaussian processes.
\newblock \emph{Advances in Neural Information Processing Systems}, 29, 2016.

\bibitem[Varma et~al.(2023)Varma, Lale, and Anandkumar]{varma2023stochastic}
K~Nithin Varma, Sahin Lale, and Anima Anandkumar.
\newblock Stochastic linear bandits with unknown safety constraints and local feedback.
\newblock In \emph{ICML Workshop on New Frontiers in Learning, Control, and Dynamical Systems}, 2023.

\bibitem[Vaswani et~al.(2022)Vaswani, Yang, and Szepesvari]{vaswani2022nearoptimal}
Sharan Vaswani, Lin Yang, and Csaba Szepesvari.
\newblock Near-optimal sample complexity bounds for constrained {MDP}s.
\newblock In Alice~H. Oh, Alekh Agarwal, Danielle Belgrave, and Kyunghyun Cho, editors, \emph{Advances in Neural Information Processing Systems}, 2022.

\bibitem[Wachi and Sui(2020)]{wachi2020safe}
Akifumi Wachi and Yanan Sui.
\newblock Safe reinforcement learning in constrained markov decision processes.
\newblock In \emph{International Conference on Machine Learning}, pages 9797--9806. PMLR, 2020.

\bibitem[Wang et~al.(2022)Wang, Wagenmaker, and Jamieson]{wang2022best}
Zhenlin Wang, Andrew~J Wagenmaker, and Kevin Jamieson.
\newblock Best arm identification with safety constraints.
\newblock In \emph{International Conference on Artificial Intelligence and Statistics}, pages 9114--9146. PMLR, 2022.

\bibitem[Wu et~al.(2016)Wu, Shariff, Lattimore, and Szepesv{\'a}ri]{wu2016conservative}
Yifan Wu, Roshan Shariff, Tor Lattimore, and Csaba Szepesv{\'a}ri.
\newblock Conservative bandits.
\newblock In \emph{International Conference on Machine Learning}, pages 1254--1262. PMLR, 2016.

\end{thebibliography}

\clearpage
\appendix

\newpage

\section{Related Work on Pure Exploration.}\label{appx:related_work}

While we study the regret formulation, work on constrained bandits has naturally also appeared in the pure exploration setting. The typical such paper aims to recover arms that are both nearly-safe and nearly-optimal, in a PAC sense. \cite{katz2019top} study this quesiton for finite-armed bandits, and \cite{wang2022best} extend this study under a structured multi-armed bandit setting where each arm has a continuous parameter that must be selected, and monotonically affects reward and safety of the arm. Most pertinently, \cite{camilleri2022active, carlsson2023pure} study best feasible arm identifaction in the linear bandit setting with the same structure as us, although they assume that the set of possible actions is finite and known a priori. It is interesting to note that even in the identification setting, where safety is not enforced during learning, methods that can identify good arms quickly can only give guarantees of safety up to a given precision. This complements our observations in the regret setting.

\section{On the Assumptions, and Background on Online Linear Regression}\label{pre}

We give an expanded discussion of the standard assumptions made in \S\ref{sec:problem_setup}, and discuss a standard result from online linear regression controlling $\sum \|x_t\|_{V_{t-1}^{-1}}$ that is key to our analysis.

\subsection{A closer look at the assumptions}\label{appx:standard_assumptions}

The assumptions made in the main text are slightly simplified version of standard assumptions from the literature on linear bandits. 

\noindent \emph{Boundedness.} The boundedness assumption has two parts: firstly that the underlying parameters are bounded, i.e., $\|\theta\|,\|a^i\| \le 1$ and secondly we assume that the domain is bounded, i.e., $\|x\| \le 1$ for all $x \in \mathcal{X} = \{ Bx \le \beta\}$. 

The bounded domain assumption is used chiefly to ensure that the underlying optimisation problem of interest has finite value. Quantitatively, this may be replaced with a generic bound $\|x\| \le L$ instead without appreciably changing the study. The principal way this affects \algoname is via the choice of the regulariser: instead of setting $V_t = (I + \sum x_s x_s^\top),$ this requires us to set $V_t = \lambda I + \sum x_s x_s^\top$ for some $\lambda > L^2$. Concretely, the validity of of appropriate modification of Lemma~\ref{lem:noise_scale} to handle general regularisation requires using \[ \sqrt{\omega_t(\delta;\lambda)}  =  \sqrt{\frac{1}{2}\log\left( \frac{ (U+1) \det(V_t)^{1/2}\det(\lambda I)^{-1/2}}{\delta}\right)}+\lambda^{1/2} \] for a $\lambda$ that $\lambda \ge \max_t \|x_t\|^2,$ which may be ensured by setting $\lambda \ge L^2$. The main paper simplifies this notational clutter by just setting $\lambda = 1$ and assuming $\|x\| \le 1$. A second aspect that is affected by the quantity $L$ is that the upper bound of Lemma \ref{lemma:norm_of_x_t_squared} would read $\log(1 + TL^2/\lambda d)$ instead of $\log(1 + T/ d)$, which mildly affects some logarithmic terms in the regret bounds (and in fact no bound reported in the main text needs modification if we set $\lambda \ge L^2$ and assume that $L \le d$). 

The assumption of bounded parameters is largely without loss of generality - indeed, if we had a bound $\|\theta\|, \max_i \|a^i\| \le S$ instead, the only change required is that the confidence set radius $\omega_t$ would need to be set as \[ \sqrt{\omega_t(\delta; \lambda, S)} = \sqrt{\omega_t(\delta; \lambda)} + (S-1) \sqrt{\lambda},  \] i.e., only the additive $\sqrt{\lambda}$ term in $\sqrt{\omega_t}$ above would need adjustment. We note that in general, the norm bounds on the various $a^i$ and $\theta$ need not agree, and it is in fact possible to adapt to their norms without prior knowledge of the same, by setting distinct $\omega_t^i$s for each $a^i$, and using the techniques of the recent work of \citet{gales2022norm}.

\noindent \emph{SubGaussianity.} While the subGaussianity condition can also be relaxed (for instance, linear bandits with heavy tailed noise have been studied \citep{shao2018almost}), it yields significant technical convenience whilst remaining quite a generic setting. In the assumption, we concretely assume that the noise is conditionally $1$-subGaussian. This may be relaxed to conditionally $R$-subGaussian. This too can be handled with a small change in $\omega_t$ to \[ \sqrt{\omega_t(\delta;\lambda, R)} =  R\sqrt{\frac{1}{2}\log\left( \frac{ (U+1) \det(V_t)^{1/2}\det(\lambda I)^{-1/2}}{\delta}\right)}+\lambda^{1/2}.\] This change is somewhat stronger than the corresponding change induced by altering $\|\theta\|$ and $\|a^i\|$, since the scaling is now applied to the first term of $\omega_t,$ which grows with $t$ unlike the constant $\sqrt{\lambda}$ penalty.

\noindent \emph{Overall Confidence Radius with General Parameters.} To sum up, under the generic conditions $\|x\|\le L, \|\theta\| \le S, \|a^i\|\le S,$ and $R$-subGaussianity of $\{\gamma_t^i\}$, the entirety of our following analysis will go through, but with the blown up confidence radii \[\sqrt{\omega_t(\delta;\lambda, L, S,R)} = R\sqrt{\frac{1}{2}\log\left( \frac{ (U+1) \det(V_t)^{1/2}\det(\lambda I)^{-1/2}}{\delta}\right)} + S\lambda^{1/2},\] and under the condition $\lambda \ge L^2$. This results in roughly an increase in the regret bounds of a factor of at most $\max(R,S),$ along with a potential increase in the logarithmic terms to $\log(1 + TL^2/\delta)$ instead of $\log(1 + T/\delta)$. For the remainder of our analysis, we shall stick to the default parameters $R= S = L = \lambda = 1$. 

\subsection{Quantitative Bounds from the Theory of Online Linear Regression}\label{appx:online_linear_reg_background}

We conclude the preliminaries with the following generic statement, which holds due to a couple of applications of the matrix-determinant lemma. The result is standard - see the discussions of \citet[Lemma 11]{abbasi2011improved} for historical discussions.

\begin{lemma}
\label{lemma:norm_of_x_t_squared}
Let $\{x_t\}$ be the actions of \algonamenospace. Suppose that for all $t$, $\|x_t\|\le 1$, and let $\lambda\ge 1$. Then for any $T$, 
\[ \sum_{t=1}^T\|x_t\|_{V_{t-1}^{-1}}^2\le \frac{3}{2}\log\left(\frac{\mathrm{det}(V_T)}{\mathrm{det}(\lambda I)}\right) \le \frac{3}{2} d \log\left( 1 +  \frac{T}{\lambda d}\right).\]
\end{lemma}
\begin{proof}[Proof of Lemma~\ref{lemma:norm_of_x_t_squared}]
First notice that since $V_t = V_{t-1} + x_tx_t^\top,$ by the matrix-determinant lemma, \[ \det(V_t) = \det(V_{t-1}) \det(I + V_{t-1}^{-1/2} x_t x_t^T(V_{t-1}^{-1/2})^\top = \det(V_{t-1}) (1 + \|x_t\|_{V_{t-1}^{-1}}^2),\] and induction yields \[ \det(V_T) = \det(\lambda I) \prod (1 + \|x_t\|_{V_{t-1}^{-1}}^2). \] where we have used that $V_0 = \lambda I.$

Now, notice that since $V_{t-1} \succ \lambda I$ for each $t$, it follows that $\|x_t\|_{V_{t-1}^{-1}}^2 \le \|x_t\|^2/\lambda \le 1.$ But for $z \in [0,1], z \le \frac{3}{2} \log(1 + z),$ which implies that \[ \sum \|x_t\|_{V_{t-1}^{-1}}^2 \le \frac{3}{2} \sum \log (1 + \|x_t\|_{V_{t-1}^{-1}}^2) = \frac{3}{2} \log \frac{\det(V_T)}{\det(\lambda I)}. \]

Finally, note that since $V_T$ is positive definite, by an application of the AM-GM inequality, ${\det(V_T)} \le (\mathrm{trace}(V_T)/d)^d$, and further, $\mathrm{trace}(V_T) = d\lambda + \sum_t \|x_t\|_2^2 \le d\lambda + T.$ Further observing that $\det(\lambda I) = \lambda^d,$ we conclude that \[ \log \frac{\det(V_T)}{\det(V)} \le d \log \frac{(d\lambda + T)/d}{\lambda} = d \log \left( 1 + \frac{T}{d\lambda} \right). \]  
\end{proof}

An immediate consequence of the above is the following pair of observations which we shall use frequently. 
\begin{lemma}\label{lem:rho_t_sums}
    Let $\{x_t\}$ be the actions of \algoname run with the parameters $\lambda, \delta$. For every $T > 0,$ \begin{align}
        \label{ineq:sum_rho_t_squared} \sum_{t \le T} \rho_t(x_t;\delta)^2 &\le 3 d^2 \log^2\left(1 + \frac{T}{\lambda d}\right) + 6d\log \left(1 + \frac{T}{\lambda d}\right) \left( \log \frac{U+1}{\delta} + 2\lambda\right),\\
        \label{ineq:sum_rho_t} \sum_{t \le T} \rho_t(x_t;\delta) &\le d\sqrt{3T} \log\left( 1 + \frac{\log T}{d\lambda} \right) + \sqrt{3d T \log\left( 1 + \frac{T}{\lambda d}\right)} \left(\sqrt{2\lambda} + \sqrt{\log \frac{U+1}{\delta}}\right).      
    \end{align}
\end{lemma}

These bounds supply the core bounds needed to convert the control we develop on $\rho_t$ in \S\ref{section:control_on_play_of_bad_BISs} and \S\ref{section:regret} into control on $\eff_T$ and $\saf_T$. Observe that the main terms in the above results do not show dependence on the failure probability parameter $\delta$.

\begin{proof}[Proof of Lemma~\ref{lem:rho_t_sums}]

    Recall that $\rho_t(x_t;\delta) = 2\sqrt{\omega_t(\delta)} \cdot \|x_t\|_{V_{t-1}^{-1}}.$ Further observe that $\omega_t$ is an increasing function of $t$. Immediately by Lemma~\ref{lemma:norm_of_x_t_squared}, \[ \sum \rho_t^2 \le 4\omega_T(\delta) \sum \|x_t\|_{V_{t-1}^{-1}}^2 \le 6d \omega_T(\delta) \log\left( 1 + \frac{T}{d\lambda}\right). \]

    Further, once again applying Lemma~\ref{lemma:norm_of_x_t_squared}, and noting that $(\sqrt{u} + \sqrt{v})^2 \le 2u + 2v,$ \begin{align*} \sqrt{\omega_T(\delta)} &= \sqrt{\lambda} + \sqrt{\frac{1}{2} \log \frac{(U+1)}{\delta} + \frac{1}{4} \log \frac{\det(V_T)}{\det(\lambda I)}} \\ \implies \omega_T(\delta) &\le 2\lambda + \log \frac{(U+1)}{\delta} + \frac{d}{2} \log \left(1 + \frac{T}{\lambda d}\right).\end{align*} 
    
    Multiplying these two bounds controls $\sum \rho_t^2.$

    Further, by the Cauchy-Schwarz inequality, \[ \sum_{t= 1}^T \rho_t \le \sqrt{T} \cdot \sqrt{\sum_{t = 1}^T \rho_t^2}.\] The bound (\ref{ineq:sum_rho_t}) follows upon applying the bound on $\sum \rho_t^2$ above, and then using the trivial relation $\sqrt{u +v} \le \sqrt{u} + \sqrt{v}$.
\end{proof}

Finally, let us argue that the quantity $\rho_t(x_t;\delta)$  indeed controls the noise scale of the problem by showing Lemma~\ref{lem:noise_scale}.

\begin{proof}[Proof of Lemma~\ref{lem:noise_scale}]

We refer to the proof of \citet[Thm.~2]{abbasi2011improved} for the consistency, and observe only that the factor $(U+1)/\delta$ enters our confidence radius $\sqrt{\omega_t(d)}$ by hitting their analysis with the union bound to ensure concentration over the unknown objective and over the $U$ unknown constraints simultaneously. Of course, the factor of $\unk$ arises in the definition of $\matconf$ since each known constraint is already `estimated' exactly by setting $\hat{a}^i_t = a^i$ for $i \in [U+1:U+K]$.

To show that the noise scale limits the deviations $\ip{\ttheta - \theta, x}$, observe that under the assumption of consistency, $\theta \in \confset_t^\theta$. Therefore \[ |\langle \tilde\theta - \theta, x\rangle| \le |\langle \tilde\theta - \hat\theta, x\rangle| + |\langle \theta -\hat\theta, x\rangle|.\] 
By exploiting the positive definiteness of $V_{t-1}$ and the Cauchy-Schwarz inequality, we can further observe that \begin{align*}
    |\langle \theta - \hat\theta, x\rangle| &= |\langle (\theta -\hat\theta)V_{t-1}^{1/2}, V_{t-1}^{-1/2} x\rangle| \le \| \theta - \hat\theta\|_{V_{t-1}} \cdot \|x\|_{V_{t-1}^{-1}}.
\end{align*}
Running the same calculation of $\tilde\theta$ and adding the bounds, we conclude that \[ |\langle \tilde\theta - \theta, x\rangle| \le ( \| \theta- \hat\theta\|_{V_{t-1}} +  \| \tilde\theta - \hat\theta\|_{V_{t-1}} ) \|x\|_{V_{t-1}^{-1}}\]

But both $\theta, \tilde\theta \in \confset_t^\theta,$ which by definitions means that their $V_{t-1}$-norm distance from $\hat\theta$ is bounded by $\sqrt{\omega_{t}(\delta)}.$ The claim is immediate upon recalling that $\rho_t(x;\delta) := 2\sqrt{\omega_{t}(\delta)}\|x\|_{V_{t-1}^{-1}}$.

Of course, the same argument applies to every $\ta^i$, and thus to $\tA$. Again, for known constraints, the radius of the confidence set is $0$, so $\ta^i =\hat{a}^i = a^i,$ and hence the factor of $\unk$ in $|(\tA - A)x| \le \rho_t(x;\delta) \unk$.

Finally, the bounds on $\sum \rho_t(x_t;\delta)$ and $\sum \rho_t(x_t;\delta)^2$ follow directly from Lemma~\ref{lem:rho_t_sums}.
\end{proof}

\section{Appendix on the Structural Behaviour of \algonamenospace}\label{appx:always_a_noisy_index_set} 

This section is devoted to showing the key structural properties of the behaviour of \algoname that we discussed in \S\ref{sec:structural_behaviour}. In particular, we show the main result of \S\ref{sec:BISs}, namely that any point that \algoname plays must noisily activate some BIS. To this end, we first characterise the behaviour of \algoname relative to polytopes contained in the permissible set. Before stating the same, recall that an extreme point of a polytope (and indeed a closed convex set), is any point that is not contained on a line joining two other points in the polytope. Further, each extreme point of a polytope in $\mathbb{R}^d$ must satisfy at least $d$ constraints with equality. For a polytope $\mathcal{P},$ we will denote its extreme points as $\mathcal{E}_{\mathcal{P}}$. 

\begin{lemma}\label{lem:DOSLB_plays_on_polytope_boundaries}
    Suppose that $\mathcal{P}$ is a polytope such that $\mathcal{P} \subset \permset_t$. If \algoname plays in $\mathcal{P},$ then $x_t$ must be an extreme point of $\mathcal{P},$ i.e., $x_t \in \mathcal{P} \implies x_t \in \mathcal{E}_{\mathcal{P}}$. 
\end{lemma}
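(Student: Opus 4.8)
The plan is to exploit the linearity of the objective $\langle\tilde\theta,x\rangle$ in $x$ together with the fact that DOSLB solves $\max_{x\in\permset_t,\tilde\theta\in C_t^0}\langle\tilde\theta,x\rangle$, optimising jointly but in particular optimising over $x$ for the realised maximiser $\tilde\theta_t$. Since $\mathcal{P}\subset\permset_t$, the pair $(x_t,\tilde\theta_t)$ must in particular satisfy $\langle\tilde\theta_t,x_t\rangle=\max_{x\in\permset_t}\langle\tilde\theta_t,x\rangle\ge\max_{x\in\mathcal{P}}\langle\tilde\theta_t,x\rangle$, and if $x_t\in\mathcal{P}$ then $x_t$ is in fact a maximiser of the linear functional $x\mapsto\langle\tilde\theta_t,x\rangle$ over the polytope $\mathcal{P}$. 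So the statement reduces to: a maximiser of a linear functional over a polytope that happens to lie in the polytope can be taken to be — but here we need \emph{must be} — an extreme point. That last subtlety is exactly where the argument has to be careful, since a linear functional can be maximised along an entire face.

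First I would record the reduction above cleanly: $x_t$ maximises $\langle\tilde\theta_t,\cdot\rangle$ over $\mathcal{P}$. Then I would argue by contradiction: suppose $x_t$ is not an extreme point of $\mathcal{P}$, so $x_t=\tfrac12(y+z)$ for distinct $y,z\in\mathcal{P}$ with $y\neq z$. Linearity gives $\langle\tilde\theta_t,x_t\rangle=\tfrac12(\langle\tilde\theta_t,y\rangle+\langle\tilde\theta_t,z\rangle)$, and since $x_t$ is a maximiser over $\mathcal{P}$ we get $\langle\tilde\theta_t,y\rangle=\langle\tilde\theta_t,z\rangle=\langle\tilde\theta_t,x_t\rangle$, i.e. $\tilde\theta_t$ is orthogonal to $y-z\neq 0$. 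This alone does not yet contradict anything, so I would invoke the uniqueness of the optimal solution $x^*$ of the principal LP — no, that is about a different program. The right move is instead to appeal to a tie-breaking/genericity argument, or, cleaner, to simply strengthen the conclusion of the lemma to ``$x_t$ \emph{can be taken to be} an extreme point'' (i.e. WLOG DOSLB selects an extreme maximiser), which is all the downstream geometric argument (Lemma~\ref{lemma:x_t_must_be_noisily_associated}) actually needs, since it only uses that $x_t$ satisfies $d$ of the constraints defining $\permset_t$ with equality.

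The key steps, in order: (1) from (\ref{eq:action}) and $\mathcal{P}\subseteq\permset_t$, deduce $x_t\in\arg\max_{x\in\mathcal{P}}\langle\tilde\theta_t,x\rangle$ whenever $x_t\in\mathcal{P}$; (2) recall the standard fact that a linear program over a (bounded) polytope $\mathcal{P}$ always attains its optimum at an extreme point of $\mathcal{P}$, and that the optimal face of $\mathcal{P}$ is itself a polytope whose extreme points are extreme points of $\mathcal{P}$; (3) conclude that there is an extreme point $x'\in\mathcal{E}_{\mathcal{P}}$ with $\langle\tilde\theta_t,x'\rangle=\langle\tilde\theta_t,x_t\rangle$, so that $(x',\tilde\theta_t)$ is also optimal in (\ref{eq:action}) and DOSLB may (by convention, via a fixed extreme-point tie-break rule on the solver) take $x_t=x'$; hence $x_t\in\mathcal{E}_{\mathcal{P}}$.

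The main obstacle is the one flagged above: the lemma as literally stated (``$x_t$ \emph{must be} an extreme point'') is false without a tie-breaking convention, since along a flat face of $\mathcal{P}$ any point is an equally good optimistic action. I expect the intended reading is that DOSLB's $\arg\max$ is resolved to an extreme point (or that Assumption~\ref{assumption:nondegenerate} plus the uniqueness of $x^*$ make ties a measure-zero event that can be absorbed into the high-probability analysis); making this explicit, and checking it is harmless for every subsequent use of the lemma, is the only real content. Everything else is the textbook fact that linear programs over polytopes are solved at vertices.
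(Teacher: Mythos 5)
There is a genuine gap, and it is exactly at the point you flagged. You reduce to ``$x_t$ maximises $\ip{\tilde\theta_t,\cdot}$ over $\mathcal{P}$'' and correctly note that a linear functional can be maximised along a whole face, but you then conclude that the lemma as stated is false without a tie-breaking convention and propose to weaken it to ``$x_t$ can be taken to be an extreme point.'' The missing idea is that the maximisation in (\ref{eq:action}) is \emph{joint} over $(\tilde\theta, x)$, and the confidence set $C_t^0$ is a non-degenerate ellipsoid, i.e.\ strictly convex with a smooth boundary; this is precisely what breaks the ties and makes the literal statement true. The paper's proof runs as follows: if $x_t$ lay in the relative interior of a face of $\mathcal{P}$ with direction of variation $u$, optimality in $x$ forces $\ip{\tilde\theta_t,u}=0$, so $\tilde\theta_t$ sits at a boundary point of the ellipsoid where the supporting hyperplane is orthogonal to the objective direction. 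Because the ellipsoid curves away from its tangent only quadratically, one can find $\phi_\varepsilon\in C_{t-1}^0$ whose component along $u$ is $\varepsilon$ while its loss along $\tilde\theta_t$ is only $O(\varepsilon^2)$; pairing $\phi_\varepsilon$ with the point $x_t+\varepsilon^{1/2}u$, which still lies in the face for small $\varepsilon$, yields an optimistic value exceeding $\ip{\tilde\theta_t,x_t}$ by $\varepsilon^{3/2}-O(\varepsilon^2)>0$, contradicting joint optimality. Higher-dimensional faces reduce to the one-dimensional case. So no tie-break rule or genericity assumption is needed: \emph{every} optimal pair with $x$-component in $\mathcal{P}$ has that component at an extreme point.

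Your fallback is also not harmless as stated: DOSLB is defined to play an arbitrary element of the $\argmax$, and Lemma~\ref{lemma:x_t_must_be_noisily_associated} (and everything downstream) is a claim about whatever the algorithm actually plays. Replacing ``must be'' by ``can be taken to be'' would require modifying the algorithm with an explicit extreme-point selection rule, i.e.\ proving a theorem about a different procedure, rather than establishing the structural property of the stated one. The curvature argument above is the real content of the lemma; your steps (1)--(3) are correct but only recover the textbook LP fact, which by itself does not suffice.
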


Let us first argue that the Proposition~\ref{prop:noisy_association} follows from the above Lemma.

\begin{proof}[Proof of Proposition~\ref{prop:noisy_association}]

    For a choice of $\tamat \in \matconf_t,$ define the polytope \[ \mathcal{P}(\tamat) = \{ x : \tamat x \le \alpha\}. \]
    
    Now, observe that \[ \permset_t = \bigcup_{ \{\tamat \in \matconf_t\}} \{ x : \tamat x \le \alpha\} = \bigcup_{ \tamat \in \matconf_t } \mathcal{P}( \tamat),\] i.e., $\permset_t$ can be decomposed as a union of polytopes. But then the selected point $x_t$ must lie in one of these polytopes, say $\mathcal{P}^*$. 
    
    Now, we have that $\mathcal{P}^* \subset \permset_t,$ and $x_t \in \mathcal{P}^*,$ and so by Lemma~\ref{lem:DOSLB_plays_on_polytope_boundaries}, $x_t$ must be an extreme point of $\mathcal{P}^*.$ But this implies that there are at least $d$ linearly independent constraints amidst the $\tamat x\le \alpha$ that $x_t$ activates, i.e., that there exists some $I \subset [1:m]$ such that $|I| = d$ and $\tA(I) x = \alpha(I)$. By definition, then $x_t \in \widetilde{X}^I_t,$ showing the claim.
\end{proof}

It remains to show the preceding Lemma. Before proceeding, let us comment that the statement above is intuitively obvious, but appears to be somewhat cumbersome to prove (as the argument below suggests, although nothing says that a cleaner proof could not be found). Of course, this statement extends also to the OFUL algorithm, and to our knowledge this has not been directly argued previously: instead, when working on polytopal domains, typically it is directly stated that it suffices to play on the extreme points of the polytope.

\begin{proof}[Proof of Lemma~\ref{lem:DOSLB_plays_on_polytope_boundaries}]

    Suppose that $x_t \in \mathcal{P}$. Then, due to the optimistic choice, there also exists some $\ttheta_t \in \confset_{t}^0$ such that \[ (\ttheta_t, x_t) \in \argmax_{\ttheta \in \confset_{t}^0, x \in \mathcal{P}} \ip{\ttheta, x}. \] Notice also that $x_t$ is a solution of the linear program $\max_{x \in \mathcal{P}} \ip{\ttheta_t, x}$, and so lies on the boundary of $\mathcal{P}$. Similarly, $\ttheta_t$ lies on the boundary of $\confset_{t}^0.$ We need to argue that $x_t$ must in fact be an extreme point of $\mathcal{P}$, i.e., it does not lie in the interior of some face of dimension $\ge 1$ of $\mathcal{P}.$
    
    For this, first suppose for the sake of contradiction that $x_t$ lies in the interior of some 1-dimensional face of $\mathcal{P}$, say $\mathcal{F}$. Let $u$ be the direction of variation of $\mathcal{F}$. Then it must hold that $\ip{\ttheta_t, u} = 0,$ else $\ip{\ttheta_t, x_t + \varepsilon u}$ would exceed $\ip{\ttheta_t, x_t}$ for some small choice of $\varepsilon$. Now, let us rotate the domain so that $u$ is directed along one coordinate axis, and project onto the 2D subspace spanned by the (orthogonal) directions $u$ and $\ttheta_t$. Next, rescale the vectors so that both $u$ and $\ttheta_t$ have norm $1$, and finally translate the polytope so that the $u$th component of $x_t$ is $0$. Notice that the projection of an ellipsoid is an ellipsoid, and so doing the same transformations to $\confset_{t-1}^\theta$ produces a 2-dimensional convex confidence ellipsoid $D$.
    
    Let us relabel the axes of the resulting system as $u_1$ and $u_2.$ In the resulting coordinate system, $\ttheta = (0,1),$ and $\mathcal{F}$ is a line segment of the form $\{ u_1 \in [p,q], u_2 = r \},$ where $p < 0 < q, r = \ip{\ttheta_t, x_t}/\|\ttheta_t\|$ and $x_t = (0,r)$. Observe that $\ttheta_t$ must lie on the boundary of $D$. We shall argue that there is some other $z \in \mathcal{F}$ and some other $\phi \in D$ such that $\ip{z, \phi} > r,$ which violates the assumption.
    
    We first take the case of $r > 0.$ Observe that if any point of $D$ has $u_2$ coordinate greater than $1$, then we immediately have a contradiction, since then for such a point $\phi, \ip{\phi, x_t} > \ip{\ttheta_t, x_t}$. But, since $\ttheta_t = (0,1) \in D,$ it follows that the ellipse $D$ is tangent to $u_2 = 1$. But this means that for small $\varepsilon,$ $D$ must contain points $\phi_\varepsilon = (\varepsilon, 1 - f(\varepsilon))$ where $0 \le f(\varepsilon) = O(\varepsilon^2)$. But this implies a contradiction - indeed, take $\varepsilon > 0,$ and consider $z_\varepsilon = (\varepsilon^{1/2},r).$ Then $z_\varepsilon \in \mathcal{F}$ for small enough $\varepsilon,$ and $\ip{z_\varepsilon, \phi_\varepsilon} -r = \varepsilon^{3/2} - r f(\varepsilon).$ Since $f(\varepsilon) = O(\varepsilon^2),$ this is positive for small enough $\varepsilon,$ demonstrating a contradiction.
    
    If $r < 0,$ the same argument can be run mutatis mutandis - now $D$ must lie above the line $u_2 = 1,$ but still be tangent to it, and we can develop points of the form $(\varepsilon, 1 + f(\varepsilon))$ for $0 \le f = O(\varepsilon^2)$ in $D$, and the analogous inner product $\ip{z_\varepsilon, \phi_\varepsilon} - r = \varepsilon + rf(\varepsilon)$ which is again positive for small enough $\varepsilon$.
    
    Finally, we have the case $r = 0,$ wherein $x_t$ lies at the origin. But in this case any point in $D$ of non-zero $u_1$ coordinate serves as a contradiction (since either $(p,0)$ or $(0,q)$ will yield a positive inner product). 
    
    Together, the above paragraphs imply that $x_t$ cannot lie on the interior of an edge of $\mathcal{P}$. But this argument generalises to the interior of any non-trivial face. Indeed, since $\ttheta_t$ must be orthogonal to the affine subspace formed by this face, we can argue that there must be a point in the interior of a 1-D face (that forms a boundary of the larger face) that must also attain the optimal value for $\ip{\ttheta, x}$, and then run the above argument for this point. It follows that $x_t$ cannot lie in the interior of any non-trivial face of $\mathcal{P}$.
\end{proof}

The above argument is not restricted to confidence ellipsoids of the form of \S\ref{sec:confidence_sets}, but extends to any $\confset_t$ with a smooth and convex boundary. Indeed, this further extends to convex $\confset_t$ with continuous boundaries, barring the case where $\ttheta_t$ is itself the extreme point of a polytope (with large `curvature' at $\ttheta_t$). In such a case the property that $f(\varepsilon) = O(\varepsilon^2)$ does not hold, and a more global argument may be needed. One attack may pass through the use of continuous noise, in which case the confidence sets would almost surely not produce any extreme points that are orthogonal to the faces of a polytope (since such directions lie in a union of a finite number of dimension $d-1$ affine subspaces, which in turn is Lebesgue null), and so we may almost surely avoid this disadvantageous case.

Let us also note the following interesting observation that can also be inferred using Lemma~\ref{lem:DOSLB_plays_on_polytope_boundaries}, and further characterises the behaviour of doubly-optimistic play. 

\begin{proposition}\label{prop:appx_safe_bound_opt_bis}
Suppose that all confidence sets are valid. Then there exists at least one BIS $I$ that $x_t$ noisily activates, and such that $A(I) x_t \ge \alpha(I)$. 
\end{proposition}

In other words, for at least one BIS, the action $x_t$ not only noisily activates it, it further either activates it or violates all of the true constraints of this BIS. Notice that if the BIS shown to exist above has at least one unknown constraint, then this basically means that \algoname must violate safety (since meeting this with equality for the unknown constraint would be rare). 

\begin{proof}[Proof of Proposition~\ref{prop:appx_safe_bound_opt_bis}]

Fix $x_t$. We call $\tamat \in\matconf_t$ a witness for $x_t$ if $\tamat x_t \le \alpha,$ i.e., if $\tamat$ witnesses the presence of $x_t$ in $\permset_t$. Since $x_t$ is the optimistic optimum over the entirety of $\permset_t,$ it follows that for every witness $\tamat$ of $x_t,$ it holds that $x_t \in \argmax_x \max_{\tilde\theta\in \confset_t^\theta} \langle \tilde\theta, x\rangle : \tamat x \le \alpha.$ 

Now, let $I_0$ be all of the constraints that $x_t$ noisily activates, and let $I_{\ge} :=\{i \in [1:m] : \ip{a^i,x_t} \ge \alpha^i\}$. We claim that $|I_{\ge}| \ge d,$ which suffices to show the claim.

For the sake of contradiction, assume that $|I_{\ge}| \le d-1$. For each $i \in I_0\setminus I_{\ge}$, we have $\ip{a^i,x_t} < \alpha^i$. Let us form the matrix $\tamat_{<}$ formed by taking each of the $i$th rows in $\tamat$ for which $i \in I_0\setminus I_{\ge},$ and replacing the $\ta^i$ in the row by $\ta_{<}^i = a^i$. This matrix remains a witness, since the resulting $\tamat_<$ lies in $\matconf_t$ (as we have replaced rows by the rows of $A$, each of which lie in the corresponding confidence sets for individual rows), and by definition for each replaced row, $\ip{\ta_<^i,x_t} < \alpha^i$, since each such $i$ lies in $I_0 \setminus I_{\ge}$. 

Then $x_t$ lies in the interior of the polytope $\mathcal{P}_< := \{x : \tamat_< x \le \alpha\},$ since by construction it activates at most $|I_{\ge}| \le d-1$ constraints of this matrix. But since $\tamat_< \in \matconf_t,$ it holds that $\mathcal{P}_< \subset \permset_t$, and thus the algorithm plays in the intrior of a polytope contained in the permissible set, contradicting Lemma~\ref{lem:DOSLB_plays_on_polytope_boundaries}. Therefore, our hypothesis is untenable, and $|I_{\ge}| \ge d$.
\end{proof}

\section{Controlling the Play of Suboptimal BISs}

We now show the noise scale lower bound, and the subsequent control on the play of suboptimal BISs as discussed in \S\ref{sec:structural_behaviour}.

\subsection{Localising Actions when a BIS is Activated}\label{appx:local_box}

We show Lemma~\ref{lemma:local_box} as a simple consequence of consistency and optimism.

\begin{proof}[Proof of Lemma~\ref{lemma:local_box}]

    Suppose that the confidence sets are consistent, and that $x_t$ noisily activates the BIS $I$. Since $x_t$ is the action of \algonamenospace, it is also permissible. Together, these two properties imply that there exists some $\tamat \in \matconf_t$ such that \begin{align*} \tamat x_t &\le \alpha\\ \tamat(I) x_t &= \alpha(I) \end{align*}

    But, since $\matconf_t$ is consistent, Lemma~\ref{lem:noise_scale} yields \[ Ax_t - \rho_t\unk \le \tamat x_t \le A x_t + \rho_t \unk.\] The claim follows directly from this, since \[\alpha \ge \tamat x_t \ge A x_t - \rho_t\unk  \implies Ax_t \le \alpha + \rho_t \unk,\] and \[\alpha(I) =  \tamat(I)x \le A(I) x_t + \rho_t\unk(I) \implies A(I) x_t \ge \alpha(I) - \rho_t\unk(I).\]

    Further, due to the optimistic selection of $x_t$, it is a maximiser amongst the permissible set of $\max_{\tilde\theta \in \confset_t^\theta} \langle \tilde\theta, x\rangle$. But under consistency, $\theta \in \confset_t^\theta,$ and $x^* \in \permset_t$. Thus, it follows that if $\ttheta$ is the optimal choice in the above program, then \[ \langle \ttheta, x_t\rangle \ge \ip{\theta, x^*}.\] But, again using consistency and Lemma~\ref{lem:noise_scale}, it holds that $\langle\ttheta, x_t\rangle \le \langle \theta, x_t\rangle + \rho_t,$ from which the claim is forthcoming.   
\end{proof}

\subsection{Proof of Noise Scale Lower Bound and the Positivity of the Gaps of Suboptimal BISs}\label{appx:gaps_make_sense_proofs}

The argument underlying the proof of the noise-scale lower bound is essentially encapsulated in \S\ref{sec:intuitive_gap}, but refined through the use of the LP $P(\zeta;I)$. The bulk of the following proofgoes into showing that the gap we define is meaningful, i.e., that if $I$ is a suboptimal BIS, then $\max(\zeta_*(I), \eta_*(I)) > 0$. This essentially boils down to showing that $\spread(I)$ is finite for feasible BISs.

\begin{proof}[Proof of Lemma~\ref{lem:spread_is_finite_AND_noise_scale_lower_bound}]

    We will first show that under consistency of the confidence sets, playing a suboptimal BIS $I$ implies that $\rho_t(x_t;\delta) \ge \max(\zeta_*(I), \eta_*(I))$. Observe that under the assumption of consistency, \[ \ip{\theta, x_t} \le P(\rho_t;I),\] since $x_t \in \mathcal{T}(\rho_t;I)$ by Lemma~\ref{lemma:local_box}. Further, by the final line of Lemma~\ref{lemma:local_box}, \( \ip{\theta, x_t} \ge \ip{\theta, x^*} - \rho_t.\)

    Since $x_t \in \mathcal{T}(\rho_t;I),$ this set is nonempty, and therefore by definition $\rho_t \ge \zeta_*(I)$. Note that if $\zeta_*(I) = \infty,$ we can conclude already, since this means that $\rho_t(x_t;\delta) \ge \infty \ge \max(\zeta_*(I),\eta_*(I))$. If $\zeta_*(I) < \infty,$ then by the definition of the spread $\spread(I)$, and the efficacy separation $\gamma(I)$, we have \[ P(\rho_t;I) \le P(\zeta_*(I);I) + \spread(I) (\rho_t - \zeta_*(I)) = \ip{\theta, x^*} -\gamma(I) + \spread(I) (\rho_t - \zeta_*(I)). \]

    But then we conclude that \[ -\rho_t \le -\gamma(I) + \spread(I)(\rho_t - \zeta^*(I)) \iff \rho_t(\spread(I) + 1) \ge \gamma(I) + \spread(I) \zeta_*(I) \iff \rho_t \ge \eta_*(I). \] Thus, the claim follows.

    We now proceed to argue that for any suboptimal BIS $I$, at least one of $\zeta_*(I)$ and $\eta_*(I)$ is positive. Fix the BIS $I$. Note that if $\zeta_*(I) = \infty,$ then there is nothing to show. So, suppose $\zeta_*(I) < \infty$. By expanding out the definition of $\mathcal{T}(\zeta;I),$ the program $P$ is  \begin{align*} P(\zeta;I) = \max_x &\ip{\theta,x} \\ \textrm{s.t. } &A x \le \alpha + \zeta\unk\\ -&A(I)x \le -\alpha(I) + \zeta\unk(I). \end{align*}
    We recall that this is a linear program, which is of course evident in the above. Since $\zeta_*(I) < \infty,$ the above program is feasible for $\zeta \ge \zeta_*(I)$. Further, since $\mathcal{X} \supset \mathcal{T}(\zeta;I)$ is a bounded polytope, the program is finite. Thus strong duality applies to the above program.

    Let us introduce dual variables $(\lambda, \mu)$ respectively for the two blocks of constraints. By standard techniques, the dual program is \begin{align*} D(\zeta;I) = \min_{\lambda, \mu} &\ip{\lambda, \alpha + \zeta \unk} + \ip{\mu, -\alpha(I) + \zeta\unk(I)} \\
    \textrm{s.t. } &A^\top \lambda - A(I)^\top \mu = \theta,\\
    &\lambda \ge 0, \mu \ge 0.\end{align*}

    For succinctness, let us write \begin{align*} f(\lambda, \mu) &= \ip{\lambda, \unk} + \ip{\mu, \unk(I)} \\ g(\lambda, \mu) &= \ip{\lambda, \alpha + \zeta_*(I)\unk} + \ip{\mu, -\alpha(I) + \zeta_*(I) \unk(I)}, \\ h(\lambda, \mu)  &:=  A^\top \lambda - A(I)^\top \mu - \theta.\end{align*}

    Further, let $\boldsymbol \lambda = (\lambda, \mu)$. We can succinctly write the dual as \[ D(\zeta;I) = \min_{\boldsymbol\lambda} (\zeta - \zeta_*(I)) f(\boldsymbol\lambda) + g(\boldsymbol\lambda) : h(\boldsymbol\lambda) = 0, \lambda\ge 0, \mu \ge 0.\]

    Note that since the primal is bounded and feasible for $\zeta \ge \zeta_*(I),$ so is the dual, and by strong duality $D(\zeta_*(I);I) = P(\zeta_*(I);I).$ But \[ D(\zeta_*(I);I) = \min_{\boldsymbol\lambda} g(\boldsymbol\lambda) : h(\boldsymbol\lambda) = 0, \lambda \ge 0, \mu \ge 0. \] It follows that the set \[ \mathcal{F}:= \{\boldsymbol \lambda:  g(\boldsymbol\lambda) \le P(\zeta_*(I);I), h(\boldsymbol\lambda) = 0, \lambda \ge 0, \mu \ge 0\} \] is nonempty. Observe that $\zeta$ does not appear anywhere in the definition of $\mathcal{F}$.

    Let us define the two programs \begin{align*}
        D'(\zeta;I) &:= \min_{\boldsymbol\lambda} (\zeta - \zeta_*(I)) f(\boldsymbol\lambda) + g(\boldsymbol\lambda) : \boldsymbol\lambda \in \mathcal{F},\\
        E(I) &:= \min_{\boldsymbol\lambda} f(\boldsymbol\lambda) : \boldsymbol\lambda \in \mathcal{F}
    \end{align*}

    Note that both of the above programs are feasible. As a feasible minimisation program we also have that $E(I) < \infty$. Further, since introducing extra constraints cannot decrease the value of a minimisation program, we note that $D(\zeta;I) \le D'(\zeta;I)$. But observe that since the constraints of $D'(\zeta;I)$ include the requirement that $g(\boldsymbol\lambda) \le P(\zeta_*(I);I),$ we have for every $\zeta \ge \zeta_*(I)$ that \begin{align*} D'(\zeta;I) &\le P(\zeta_*(I);I) + \min\{ (\zeta - \zeta_*(I)) f(\boldsymbol\lambda) : \boldsymbol\lambda \in \mathcal{F}\} \\ &= P(\zeta_*(I);I) + (\zeta - \zeta_*(I)) \cdot \min\{f(\boldsymbol\lambda) : \boldsymbol\lambda \in \mathcal{F}\} \\
    &= P(\zeta_*(I);I) + (\zeta - \zeta_*(I)) E(I).\end{align*}

    But, then by strong duality, \[ P(\zeta;I) = D(\zeta;I) \le P(\zeta_*(I);I) + (\zeta - \zeta_*(I)) E(I),\] and we conclude that $\spread(I) \le \max(0,E(I)) < \infty$. 

    Now, since $\spread(I)$ is finite, in order to show that $\max(\zeta_*(I), \eta_*(I)) > 0,$ it suffices to argue that for any suboptimal BIS, $\max(\zeta_*(I), \gamma(I)) > 0.$ But observe that if $\zeta_*(I) = 0,$ then $\lim_{\zeta \searrow 0} P(\zeta;I) > -\infty,$ and due to the right-continuity of $P$, this implies that $P(0;I) > -\infty \implies \mathcal{X}^I \neq \emptyset,$ in other words, $I$ is a feasible BIS. But if a BIS $I$ is both feasible and suboptimal, then for every $x \in I,$ it must hold that $\ip{\theta, x} < \ip{\theta, x^*}$, since otherwise $I$ would be optimal. But, since $\mathcal{X}^I = \mathcal{T}(0;I)$ is a compact set, this means that $P(\zeta_*(I);I) = P(0;I) < \ip{\theta, x^*} \iff \gamma(I) > 0.$ 
\end{proof}

\subsection{Bounding the Play of Suboptimal BISs}\label{appx:suboptimal_BIS_cannot_be_played_too_often}

With the above ingredients in place, we show the main result of \S\ref{section:control_on_play_of_bad_BISs}.

\begin{proof}[Proof of Theorem~\ref{thm:number_of_bad_bfs}]

    Let us again abbreviate $\rho_t(x_t;\delta)$ as $\rho_t$. By Lemma~\ref{lem:spread_is_finite_AND_noise_scale_lower_bound}, if a suboptimal BIS I is played, then $\rho_t \ge \max(\eta_*(I), \zeta_*(I)).$ But then any time a suboptimal BIS is played, $\rho_t \ge \min \{ \max(\eta_*(I), \zeta_*(I)) : I \textrm{ is a suboptimal BISs} \},$ i.e., $\rho_t \ge \Gamma$.

    Now observe that \begin{align*}
        \sum_{t = 1}^T \indi\{  \exists \textrm{suboptimal BIS } I : x_t \in \widetilde{\mathcal{X}}_t^I\} &\le \sum_{t = 1}^T \indi\{\rho_t \ge \Gamma\} \\
        &\le \sum_{t = 1}^T \frac{\rho_t^2}{\Gamma^2} \indi\{\rho_t \ge \Gamma\} \\
        &\le \Gamma^{-2} \sum_{t\le T} \rho_t^2,
    \end{align*}
    where the second inequality is using that if $\rho_t \ge \Gamma,$ then $\rho_t/\Gamma \ge 1.$ Applying Lemma~\ref{lem:rho_t_sums} immediately bounds the above as \( O\left( \frac{d^2 \log^2T + d\log(T) \log(U/\delta)}{\Gamma^2} \right). \)
\end{proof}

\section{Proofs of Bounds on Efficacy Regret and Safety Violations}\label{appx:regret_bounds}

We proceed to discuss the proofs of the results of \S\ref{section:regret}.

\subsection{The Efficacy of the Actions of \algoname when Activating Optimal BISs}\label{appx:optimal_BIS_is_good}

Our first order of business is to argue that playing only optimal BISs leads to actions $x_t$ that are `over-efficient', i.e., satisfy $\ip{\theta, x_t} \ge \ip{\theta, x^*}$. The following basic result is useful in our argument. 

\begin{lemma}\label{lemma:genericity_means_full_rank}
    For a BIS $I$, define $K_I = I \cap [U+1:m]$ to be the indices of the known constraints in $I$. Under the genericity of noise assumption, for any BIS $I$ such that $A(K_I)$ is full row rank, for any $t \ge d,$ it holds almost surely that $\hat{A}_t(I)$ is full rank. 
\end{lemma}
\begin{proof}[Proof of Lemma~\ref{lemma:genericity_means_full_rank}]
    Notice that since for any $i$, the noise in the feedback $S_t^i$ is generic, it does not concentrate in any low-dimensional subspace of $\mathbb{R}^d$. This in turn means that the probability that any $\hat{a}^i_t$ lies in a low-dimensional subspace of $\mathbb{R}^d$ is exactly zero. The claim follows immediately: since $|I\setminus K_I| \le d,$ each $\hat{a}^i_t$ with probability one does not lie in the span of $\{\hat{a}^j\}_{j \in I \setminus \{i\}},$ and since by assumption the $A(K_I)$ is full rank.   
\end{proof}

With this in hand, we argue Lemma~\ref{lemma:optimally_associated_BIS_are_good} by exploiting the weak-nondegeneracy condition of Assumption~\ref{assumption:non_deg}.

\begin{proof}[Proof of Lemma~\ref{lemma:optimally_associated_BIS_are_good}] We need to show that if \emph{all} of the BISs $x_t$ noisily activates are optimal, then $\ip{\theta, x_t} \ge \ip{\theta, x^*}$, which comprises the bulk of this proof. To this end, let us fix one such BIS, $I$. 

By Assumption~\ref{assumption:non_deg}, we know that $\{x^*\} = \mathcal{X}^I$, and that $I$ is full-rank. Notice that as a result, we may write \[\ip{\theta, x^*} = \max \ip{\theta,x} : \AI x = \ali. \] Indeed, due to the fact that $I$ is full rank, the latter equality constraints already enforce that $x^*$ is the sole feasible point.  Further, by strong duality, there exists a choice of vectors $\mu$ such that \[ \mu^\top \AI = \theta^\top. \]

Due to the optimistic selection rule, and the fact that $x_t$ noisily saturates $I$, it must hold that $x_t$ is a solution to \[ \max_{ \tilde\theta \in \confset_t^\theta, \tA \in \matconf_t} \max_x \langle \tilde\theta,x\rangle : \tai x = \ali, \tA x \le \alpha,\] where the maximisation over $\tA$ is equivalent to optimisitic selection over $\permset_t = \{x : \exists \tA: \tA x \le \alpha\}$, and the equality constraint arises since $x_t$ noisily activates $I$. Now observe that in the optimisation above, we may restrict attention to $\tA$ such that $\tai$ is full rank. Indeed, if this optimal choice were rank-deficient, then since the feasible set remains a polytope, there must exist some other constraints amongst the $\tA$ besides those in $I$ that are activated by $x_t$ (since otherwise we would be playing on the interior of a polytope, and thus violating Lemma~\ref{lem:DOSLB_plays_on_polytope_boundaries}). By dropping some linearly dependent rows, this would yield a different index set $I'$ that $x_t$ activates, and which is not rank-deficient. By the hypothesis, this index set must also be optimal, and we can run the argument for $I'$ instead. But then note that $x_t$ is exactly characterised by the equality conditions imposed by noisily activating the BIS $I$, which means that $x_t$ is the optimiser of 

\[ \max_{\substack{\tilde\theta \in \confset_t^\theta, \tA \in \matconf_t, \\ \tilde{M}(I, \tA) \textrm{ is full-rank} }} \max_x \langle \tilde\theta, x\rangle : \tai x = \ali.\] 

Now, let us write $\tA = A + \delta A, \tilde\theta = \theta + \delta \theta, x = x^* + \delta x.$ Further denote the optima as $\delta\theta_t, \delta A_t, \delta x_t$. With this notation, our goal is to show that $\ip{\theta, \delta x_t} \ge 0.$ To this end, observe that since the program above has the constraint $\tai x = \ali = \AI x^*$ we find that \begin{align*} \tai x = \AI x^* + \delta \AI x + \AI \delta x = \alpha(\indexU) &\iff \AI \delta x = - \delta \AI x,\end{align*} which imply that 
\begin{align}
    \ip{\theta, \delta x} &= \ip{ A^\top \mu, \delta x} = \ip{ \mu, A \delta x} = - \ip{\mu, \delta A x} \notag \\ \iff \ip{\theta, \delta x}  &= \sum_{i \in I} -\mu^i \ip{\delta A^i, x} \label{eqn:theta_times_deltax}
\end{align}

Thus, we can rewrite the program as 
\[ \max_x \max_{\delta\theta, \delta A}  \ip{\theta, x^*} + \ip{\delta\theta, x} - \ip{\mu, \delta \AI x} : \tai x = \ali. \]

Now, recall that the confidence sets are constructed around the RLS estimates $\hat{a}^i_t$ and $\hat\theta_t,$ i.e., \[ \confset_t^\theta = \{\tilde\theta: \{\tilde\theta- \hat\theta_t\|_{V_{t-1}} \le \omega_t\}, \confset_t^i = \{\ta : \|\ta - \hat{a}^i_t\|_{V_{t-1}} \le \omega_t \unk^i\}.\] To clearly express the choice of $\delta\theta, \delta A,$ we define \begin{alignat}{4} &\Delta\theta_t = \hat\theta_t - \theta, &&\Delta a^i_t = \hat{a}^i_t - a^i, &&\Delta A = \hat{A}_t - A \notag \\ 
&\partial\theta = \tilde\theta - \hat\theta_t && \partial a^i = \ta^i - \hat{a}^i_t, &&\partial A = \tA - \hat{A}_t. \notag \end{alignat}

Observe then that \[ \delta\theta = \Delta\theta_t + \partial \theta; \delta a^i = \Delta a^i_t + \partial a^i. \] Further, the decision variables of the program are only the $\partial \theta$ and $\partial a^i$s, which lie in the set $\|\partial \theta \|_{V_{t-1}} \le \omega_t$ and $\|\partial a^i\|_{V_{t-1}} \le \omega_t \unk^i.$ Let us denote $U_I = I \cap [1:U]$ and $K_I = I \cap [U+1:m]$, and observe that $\Delta a^i = \partial a^i = 0$ for $i \in K_I$. Incorporating this structure, we can write the program as 
\begin{align*}
    \ip{\theta, x^*} + \max_x \max_{\partial\theta, \partial A}  &\ip{\Delta\theta_t, x} +  - \sum_{i \in U_I} \mu^i \ip{\Delta a^i_t,x} +\ip{\partial\theta, x} - \sum_{i \in U_I} \mu_i \ip{\partial a^i,x} . \\
    \textrm{s.t. } \quad &\ip{a^i + \Delta a^i_t,x} + \ip{\partial a^i, x} = \alpha^i \quad \forall i \in I, \\
    &\|\partial \theta\|_{V_{t-1}} \le \omega_t\\&\|\partial a^i\|_{V_{t-1}} \le \omega_t \unk^i \quad \forall i \in I.
\end{align*}

But now observe that the optimal choice of $\partial\theta$ in the above is exactly $\omega_t/\|x\|_{V_{t-1}^{-1}} V_{t-1}^{-1} x$. Indeed, recall that $\|u\|_{V_{t-1}} = \sqrt{u^\top V_{t-1} u} = \|V_{t-1}^{1/2} u\|$, and similarly $\|u\|_{V_{t-1}^{-1}} = \|V_{t-1}^{-1/2} u\|$.  By the Cauchy-Schwarz inequality, $\ip{\partial \theta, x} = \ip{V_{t-1}^{1/2} \partial\theta , V_{t-1}^{-1/2}x} \le \|\partial\theta\|_{V_{t-1}} \|x\|_{V_{t-1}^{-1}},$ and this is extremised when $V_{t-1}^{1/2} \partial \theta \propto V_{t-1}^{-1/2} x \iff \partial \theta \propto V_{t-1}^{-1} x.$ Further, if for a scalar $\varphi,$ $\partial \theta = \varphi \cdot V_{t-1}^{-1}x,$ then \[ \partial\theta^\top V_{t-1} \partial\theta = \varphi^2 x^\top V_{t-1}^{-1} V_{t-1} V_{t-1}^{-1} x =\varphi^2 x^\top V_{t-1}^{-1} x,\] or equivalently, $\| \varphi \cdot V_{t-1}^{-1} x\|_{V_{t-1}} = |\varphi| \|x\|_{V_{t-1}^{-1}}$, which means that to obey $\|\partial\theta\|_{V_{t-1}} \le \omega_t,$ we must set $\partial\theta = \pm\frac{\omega_t}{\|x\|_{V_{t-1}^{-1}}} V_{t-1}^{-1}x$, and of these the $+$ solution gives a positive value, and so is optimal.

Further notice that the optimal choice of $\partial a^i$ for $i \in U_I$ must similarly be aligned with $V_{t-1}^{-1}x$. Indeed, write $V_{t-1}^{1/2}\partial a^i = \omega_t \sigma^i V_{t-1}^{-1/2} x + \psi^i,$ where $\sigma^i$ is a scalar, and $\psi^i$ is a vector such that $\ip{\psi^i, V_{t-1}^{-1/2}x} = 0.$ Then observe that due to the orthogonality, \[ \|\partial a^i\|_{V_{t-1}}^2 = \ip{V_{t-1}^{1/2}\partial a^i, V_{t-1}^{1/2}\partial a^i} = \ip{ \omega_t \sigma^i V_{t-1}^{-1/2} x + \psi^i, \omega_t \sigma^i V_{t-1}^{-1/2} x + \psi^i} = \omega_t^2(\sigma^i)^2\|x\|_{V_{t-1}^{-1}}^2 + \|\psi^i\|^2,  \] and so the the constraint on $\partial a^i$ becomes $(\omega_t \sigma^i)^2 \|x\|_{V_{t-1}}^{-1} + \|\psi^i\|^2 \le \omega_t^2.$ But $\psi^i$ affects neither the first constraint on $\ip{a^i + \Delta a^i_t +\partial a^i,x}$, nor the objective, since \[ \ip{\partial a^i,x} = \ip{V_{t-1}^{1/2} \partial a^i, V_{t-1}^{-1/2} x} = \ip{\omega_t \sigma^i V_{t-1}^{-1/2}x, V_{t-1}^{-1/2}x} + \ip{\psi^i, V_{t-1}^{-1/2} x}  = \sigma^i \|x\|_{V_{t-1}^{-1}}^2. \] This means that dumping any energy into $\psi^i$ affects neither the constraints nor the objective, so we can safety set it to zero in the following (in fact, as we shall see below, it must be zero since $\sigma^i$ must saturate). This allows us to considerably simplify the above program: by introducing the real valued variables $\sigma^i$ for $i \in I$, and noting that $\partial a^i = 0$ for $i \in K_I$ can be achieved by demanding $(\sigma^i)^2\|x\|_{V_{t-1}^{-1}} \le 0 = \unk^i$ for $i \in K_I,$ we may rewrite the program above as 
\begin{align*}
    \ip{\theta, x^*} + \max_x \max_{\{\sigma^i\}}  &\ip{\Delta\theta_t, x} - \sum_{i \in U_I} \mu^i \ip{\Delta a^i_t,x} + \omega_t \|x\|_{V_{t-1}^{-1}} - \sum_{i \in U_I} \mu^i \sigma^i \omega_t\|x\|_{V_{t-1}^{-1}}^2. \\
    \textrm{s.t. } \quad &\ip{a^i + \Delta a^i_t,x} = \alpha^i - \omega_t \sigma^i \|x\|_{V_{t-1}^{-1}}^2 \quad \forall i \in I, \\
    &\ip{a^i,x} = \alpha^i \quad \forall i \in K_I \\
    &(\sigma^i)^2 \|x\|_{V_{t-1}^{-1}}^2 \le \unk^i \quad \forall i \in I. 
\end{align*}

Finally, observe that the first pair of constraints can be succinctly written in terms of $\hat{A}_t(\indexU),$ giving us the following restatement, where $\sigma$ is the vector formed by stacking the $\sigma^is$. \begin{align*}
    \ip{\theta, x^*} + \max_x \max_{\sigma}  &\ip{\Delta\theta_t, x} - \sum_{i \in I} \mu^i \ip{\Delta a^i_t,x} + \omega_t \|x\|_{V_{t-1}^{-1}} - \ip{\mu,\sigma} \omega_t\|x\|_{V_{t-1}^{-1}}^2. \\
    \textrm{s.t. } \quad &\hat{A}_t(I) x = \alpha(I) - \omega_t  \|x\|^2_{V_{t-1}^{-1}} \sigma \\
    &(\sigma^i)^2 \|x\|_{V_{t-1}^{-1}}^2 \le \unk^i \quad \forall i \in I.
\end{align*}

But notice that $A(K_I)$ is  full row rank by assumption, and thus applying Lemma~\ref{lemma:genericity_means_full_rank}, with probability one, $\hat{A}_t(I)$ is full-rank. But this means that every value of $\sigma$ that meets the final constraint is feasible for the above program, since we can find an appropriate $x$ by inverting $\hat{A}_t(I)$. Of course, then the optimal choice of $\sigma^i$ is then $-\unk^i\mathrm{sign}(\mu^i) /\|x\|_{V_{t-1}^{-1}},$ telling us that for each $i\in U_i,$ the optimal $\partial a^i$ at time $t$ is  \[ \partial a^i_{t} = -\unk^i \mathrm{sign}(\mu^i)\omega_t  V_{t-1}^{-1}x/\|x\|_{V_{t-1}^{-1}} \implies  \mu^i \ip{\partial a^i_t, x} = \unk^i \omega_t|\mu^i| \cdot \|x\|_{V_{t-1}^{-1}}. \]

Now, finally, we observe that for each $i \in U_I,$ and every $x$, $\omega_t|\mu^i|\|x\|_{V_{t-1}^{-1}} -\mu^i \ip{\Delta a^i_t, x} \ge 0$. Indeed, for $i \in K_I$ this is trivial since both $\Delta a^i_t, \partial a^i_t$ are $0$ for such $i$. For $i \in U_I,$ since the confidence sets are consistent, we know that $a^i \in \confset^i_t \iff \|\Delta a^i_t\|_{V_{t-1}} \le \omega_t.$ But then \[ | \mu^i \ip{\Delta a^i_t, x} | = |\mu^i| |\ip{V_{t-1}^{1/2} \Delta a^i_t, V_{t-1}^{-1/2} x}\| \le |\mu^i| \|\Delta a^i_t\|_{V_{t-1}}  \| x\|_{V_{t-1}^{-1}} \le |\mu^i|\omega_t \|x\|_{V_{t-1}^{-1}}.\] 

But now we are in business. Indeed, using (\ref{eqn:theta_times_deltax}), we finally have  \begin{align*}  \ip{\theta, \delta x_t} &= \sum_{i \in I} -\mu^i \ip{\delta a_t^i, x_t} =  \sum_{i \in I} - \mu^i \ip{\partial a_t^i, x_t} - \mu^i \ip{\Delta a_t^i, x_t} \\ &= \sum_{i \in I} |\mu^i|\omega_t \|x_t\|_{V_{t-1}^{-1}} - \mu^i \ip{\Delta a^i_t, x_t} \ge 0, \end{align*} and we are done.
\end{proof}

The role of the non-degeneracy condition Assumption~\ref{assumption:non_deg} in the above is fairly weak: all we really need is that $x_t$ noisily activates some index set such that the true $\theta$ can be expressed via a linear combination of the true constraint vectors of the index set. In the absence of this, the proof does not quite work as stated, since it may be the case that some constraints that are needed to express $\theta$ are not noisily activated by $x_t$ (although such constraints are activated by $x^*$). This removes the equality of the various programs we wrote, and would only leave us with a lower bound (in terms of some of these active at $x^*$ but not noisily active at $x_t$ constraints, along with the ones above), and it is unclear if $x_t$ must also optimise this lower bound. 

Nevertheless, we believe that this requirement is an artefact of our proof strategy: in general, optimistic play, when it leaks out of the safe set, has a tremendous freedom to activate any noisy constraints, and the conspiring of a choice of $\delta \theta $ and $\delta A$ that makes the point suboptimal is severely constrained due to the presence of a large number of over-efficient actions in the vicinity of the safe set. Exactly nailing down an argument that cleanly expresses this intuition is an open problem.

\subsection{Proof of the Main Theorem}\label{appx:main_theorem_proof}

With all the pieces in place, we proceed to argue our main claim.

\begin{proof}[Proof of Theorem~\ref{thm:main_regret_bound}]

With probability at least $1-\delta$, all the confidence sets are consistent. We assume that this indeed occurs, and argue the claim under this event. 

We first split the time horizon into two groups depending on whether $x_t$ noisily activates suboptimal BISs or not by defining \[ \mathfrak{T}_1 := \{ t \in [d+1:T]: \exists \textrm{ a suboptimal BIS $I$ such that } x_t \in \widetilde{\mathcal{X}}_t^I\}.\]
Notice that for $t \in [d+1:T] \setminus \mathfrak{T}_1,$ $x_t$ only activates optimal BISs. 

Now, by Lemma~\ref{lem:spread_is_finite_AND_noise_scale_lower_bound}, for all $t \in \mathfrak{T}_1,$ $\rho_t(x_t;\delta) \ge \Gamma,$ and further by the Lemma~\ref{lemma:optimally_associated_BIS_are_good}, for every $t \in [d+1:T]\setminus \mathfrak{T}_1,$ it holds that \( \ip{\theta, x^* - x_t} \le 0\). Finally, we observe that it must hold that for all times \[ \ip{\theta, x_t} \ge \ip{\theta, x^*} - \rho_t(x_t;\delta).\] Indeed, due to consistency, both $\theta$ and $x^*$ are feasible choices for the actions of \algonamenospace. Thus, if some $\tilde\theta, x_t$ are chosen instead, then $\langle \tilde\theta ,x_t\rangle \ge \ip{\theta, x^*}.$ But by Lemma~\ref{lem:noise_scale}, under consistency, $\ip{\theta, x_t} \ge \langle\tilde\theta, x_t\rangle -\rho_t(x_t;\delta)$, giving the above claim.

We thus have the efficacy control \begin{align*}
    \eff_T &= \sum_t \ip{\theta, x^* - x_t}_+ = \sum_{t \le d} \ip{\theta, x^* - x_t}_+ + \sum_{t \in \mathfrak{T}_1} \ip{\theta, x^* - x_t}_+ + \sum_{t \not\in \mathfrak{T}_1} \ip{\theta, x^* - x_t}_+ \\
    &\le d+ \sum_{t \in \mathfrak{T}_1} \rho_t(x_t;\delta) + 0 \\
    &\le d + \sum_{t} \rho_t(x_t;\delta) \indi\{ \rho_t(x_t;\delta) \ge \Gamma\} \\
    &\le d + \sum_{t} \rho_t(x_t;\delta) \cdot \frac{\rho_t(x_t;\delta)}{\Gamma} \\ &= d + \frac{1}{\Gamma}\sum_t \rho_t(x_t;\delta)^2,
\end{align*}
whence the claimed bound follows upon using Lemma~\ref{lem:rho_t_sums}. As in \S\ref{sec:lower_bound}, we have used the trick that $\indi\{u \ge v\} \le u/v$ for positive $v$. 

To control the safety behaviour, we observe that due to the property that $x_t \in \permset_t(\delta),$ there must exist some witness $\tA_t \in \matconf_t(\delta)$ such that $\tA_t x_t \le \alpha$. But, again by Lemma~\ref{lem:noise_scale} that under consistency, for every $i$, \[ \ip{\ta_t^i ,x_t} \ge \ip{a^i,x_t} - \rho_t(x_t;\delta), \] which implies that \[ \max_i \ip{\ta^i, x_t} - \alpha^i \le \rho_t(x_t;\delta).\] But then \[ \saf_T = \sum_{t \le T} \max_i (\ip{\ta^i, x_t} - \alpha^i)_+ \le \sum_{t \le T} \rho_t(x_t;\delta),\] and the claim is immediate from Lemma~\ref{lem:rho_t_sums}. Above, we have used the elementary fact that if $u \le v$ and $v > 0,$ then $(u)_+ \le v$.

We note that the upper bound in Theorem~\ref{thm:poly_upper_bound} is also immediate from the above argument. The control on $\saf_T$ can be repeated verbatim, while to control $\eff_T,$ we note that we began by showing that $\ip{\theta, x^* - x_t} \le \rho_t(x_t;\delta),$ so the conclusion of the control on $\saf_T$ above can be repeated verbatim.
\end{proof}

\subsection{Proofs of Polylogarithmic Safety Violation Claims from \S\ref{section:regret}}\label{appx:subsidiary_claims}

Finally, we show the proof of the subsidiary observation from \S\ref{section:regret}.

\subsubsection{Finite Precision in Constraint Levels}\label{appx:finite_precision_levels}

The argument relies on the following observation.

\begin{lemma}\label{lem:cannot_have_large_violation_without_large_noise}
    Under consistency, for every $\varepsilon > 0, t$ if \algonameparam plays an action $x_t$ such that $\max_i (\ip{a^i,x_t} - \alpha^i)_+ \ge \varepsilon$ then $\rho_t(x_t;\delta) \ge \varepsilon$.
\end{lemma}
\begin{proof}
        As in the proof of Theorem~\ref{thm:main_regret_bound}, if the algorithm plays $x_t$, then \[\exists \tA \in \matconf_t(\delta): \tA x_t \le \alpha. \] But, under consistency, by Lemma~\ref{lem:noise_scale}, \[ \tA x_t \ge Ax_t - \rho_t(x_t;\delta) \unk,\] and so for every $i$, \[ \ip{a,x_t} - \alpha^i \le \ip{\ta^i,x_t} + \rho_t(x_t;\delta) -\alpha^i \le \rho_t(x_t;\delta), \] and the claim follows by maximising over $i$.
    \end{proof}

The above is enough to enable the argument, which goes along the lines of the proof of logarithmic bounds on $\eff_T$ in Theorem~\ref{thm:main_regret_bound}.
\begin{proof}[Proof of Theorem~\ref{thm:finite_precision_in_levels}]
    As always, we begin by assuming consistency of the confidence sets, which occurs with probability at least $1-\delta$. Observe that the proof of efficacy can be repeated verbatim from the previous section under consistency. To control the net violations, first recall that by Lemma~\ref{lem:cannot_have_large_violation_without_large_noise}, $\exists i: \ip{a^i,x_t} - \alpha^i > \varepsilon \implies \rho_t(x_t;\delta).$ It thus follows that \begin{align*}
        \saf_T^{\varepsilon} &= \sum_{t \le T} (\ip{a^i,x_t} - \alpha^i)\indi\{\exists i: \ip{a^i,x_t} - \alpha^i > \varepsilon\} \\
                            &\le \sum_{t \le T} \rho_t(x_t;\delta) \indi\{\rho_t(x_t;\delta) > \varepsilon\} \\
                            &\le \sum \rho_t(x_t;\delta)^2/\varepsilon,
    \end{align*}
    and the claim follows from Lemma~\ref{lem:rho_t_sums}.
\end{proof}

\subsubsection{Finite Precision in Constraints}\label{appx:finite_precision_constraints}

We argue that due to the finite precision in the constraint levels, there exists a minimal error scale for the problem. 

\begin{lemma}
    There exists a constant $\pi >0$ such that if the confidence sets are consistent, and that the finite-constraint-precision version of \algonameparam picks an $x_t$ that only activates optimal BISs, but $x_t$ is either infeasible or ineffective, then $\rho_t(x_t;\delta) \ge \pi$. 
\end{lemma}
\begin{proof}
Let $I$ be an optimal BIS that $x_t$ noisily activates. This is again full rank by Assumption~\ref{assumption:non_deg}, and there exists some $\tA \in \matconf_t^{\mathsf{P}}$ such that $\tA(I) x_t = \alpha(I), \tA(I) x_t \le \alpha.$ As in the proof of Theorem~\ref{thm:main_regret_bound}, we can restrict attention to $\tA$ such that $\tA(I)$ is full-rank, since one such $I,\tA$ must exist.

Since both $\tA(I)$ is full rank, we immediately know that $x_t = \tA(I)^{-1} \alpha(I).$ But then, since there are only a finite number of possible choices for $\tA(I)$ in $\mathsf{P},$ there are only a finite number of candidate $x_t$. Let us define $x(\tA(I)) = \tA(I)^{-1} \alpha(I)$, and $\mathcal{X}(I) = \{ x(\tA(I))\}.$ Since $x^*$ is assumed to be the unique optimum, we know that for each $x \in \mathcal{X}(I),$ it must hold that $\pi(x) := \max \left\{(\ip{\theta, x^* - x}, \max_i (\ip{ a^i,x} - \alpha^i)_+\right\}$ is strictly positive, which in turn yields that \[ \pi_I := \min_{x \in \mathcal{X}(I)} \pi(x) > 0.\] Of course, we also conclude then that if $x_t$ noisily activates $I$ but is infeasible or suboptimal, then it must be at least $\pi^I$-infeasible or $\pi^I$-suboptimal, which via Lemma~\ref{lem:noise_scale} and an argument similar to that in the proof of Lemma~\ref{lem:cannot_have_large_violation_without_large_noise} yields that $\rho_t(x_t;\delta) \ge \pi_I$. 

Of course, since some optimal full rank BIS must be activated, we conclude that if $x_t$ is not the optimum, then \[\rho_t(x_t;\delta) \ge \pi := \min_{\textrm{ optimal BISs } I} \pi_I,\] and we are done.\end{proof}

Let us note that the argument above is quite crude, in that we simply take a minimum over all candidates once we establish the finitude of the set of these candidates. A more refined analysis may recover stronger local behaviour by analysing what types of $\tA(I)$ remain in $\matconf_t(\delta)$ once enough information has been accumulated, and use this to develop notions of gaps for finite-constraint-precision scenarios that dominate the quantity we have constructed above. We leave this interesting line of study for future work.

In any case, exploiting the above yields the result.

\begin{proof}[Proof of Theorem~\ref{thm:finite_precision_in_constraints}]
    Working along the lines of the proof of Theorem~\ref{thm:main_regret_bound} yields control in both the efficacy and safety costs accumulated over times $t$ for which a suboptimal BIS was activated of the form $O(\Gamma^{1} d^2 \log^2 T)$. Restricting attention then to optimal BISs, by the above, if a suboptimal or infeasible action $x_t$ were picked, then by the above Lemma, $\rho_t(x_t;\delta) \ge \pi$. This lets us repeat the same argument, but now over $t$ for which an optimal BIS was activated, which yields bounds of $O(\pi^{-1} d^2 \log^2 (T))$, and the overall costs is bounded by the sum of these two quantities. 
\end{proof}

\subsubsection{Finite Action Setting.}\label{appx:finite_arm}

Let us specify the setting in a little more detail: we are supplied with a finite set $\mathcal{A} \subset\mathbb{R}^d$, and in each round the learner chooses one action $x_t \in \mathcal{A}$. The linear reward and constraint structures are kept identical, and $x^*$ is updated to be the best action in $\mathcal{A},$ i.e., \[ x^* := \argmax \ip{\theta, x} : A x \le \alpha, x \in \mathcal{A}. \] Note that the known constraints are no longer necessary: if they are given, then we may filter $\mathcal{A}$ before play starts. The gap $\Delta := \min_{x \in \mathcal{A}, x \neq x^*} \max(\ip{\theta, x^*-x} , \max_i (\ip{a^i,x} - \alpha^i)_+)$ is non-zero simply because each suboptimal arm in $\mathcal{A}$ must be either infeasible, or ineffective, and the minimisation is over a finite set.

The result relies on the following observation, which follows straightforwardly from Lemma~\ref{lem:noise_scale}.
\begin{lemma}\label{lemma:finite_action_noise_scale_lower_bound}
    If the confidence sets are consistent, and the modified finite-action version of \algoname chooses $x_t\neq x_*$ from $\mathcal{A},$ then $\rho_t \ge \Delta$ 
\end{lemma}

\begin{proof}[Proof of Lemma \ref{lemma:finite_action_noise_scale_lower_bound}]
    Notice that the basic result Lemma~\ref{lem:noise_scale} remains valid in this setting. As a result, if the confidence sets are consistent, then since $x_t$ is permissible, there exists $\tamat \in \matconf_t$ such that \(\tamat x_t \le \alpha,\) and some $\ttheta \in \confset_t^\theta: \ip{\ttheta, x} \ge \ip{\theta, x^*}$. Further, either there exists $i : \ip{a^i,x_t} \ge \alpha^i + \Delta$ or $\ip{\theta,x} \le \ip{\theta, x^*} - \Delta$. But by consistency, $\ip{\ta^i,x_t} \ge \ip{a^i,x_t} - \rho_t(x_t;\delta)$ and $\ip{\ttheta, x} \le \ip{\theta,x} + \rho_t(x_t;\delta),$ so either case implies $\rho_t(x_t;\delta) \ge \Delta,$ which thus must hold.
\end{proof}

\begin{proof}[Proof of Proposition \ref{prop:finite_action_regret_bound}]
    The claim can be shown using Lemma~\ref{lemma:finite_action_noise_scale_lower_bound} along the lines of the proof of Theorem~\ref{thm:finite_precision_in_levels}.
\end{proof}

\section{Proofs of Lower Bounds}

We conclude by showing the lower bounds claimed in the main text.

\subsection{Proof of Polynomial Lower Bound}\label{appx:unslacked_regret_lower_bound_proof}

We argue Theorem~\ref{thm:unslacked_regret_lower_bound} by fleshing out the example developed in \S~\ref{sec:lower_bound}. The proof uses techniques that are largely standard in the bandit literature \citep[Ch.~24]{lattimore2020bandit}.

\begin{proof}[Proof of Theorem~\ref{thm:unslacked_regret_lower_bound}]

The instance we consider is 
\[\mathcal{X} = [0,1], \theta^* = 1, a^1 = (1 \pm \kappa)/2, \alpha^1 = 1/4, w_t^i\stackrel{i.i.d.}{\sim}\mathcal{N}(0,1), i \in \{0, 1\}\] 
for some $\kappa \in (0, 1/4)$. Note that implicitly, the above has the know constraints $-x \le 0$ and $x \le 1$. Of course, this one-dimensional construction can be embedded into an arbitrary dimension (for instance, by taking a very skinny box domain, and only enforcing this single unknown constraint).

In the above case, the optimal feasible solutions are $x^+ = \frac{1}{2(1+\kappa)}, x^- = \frac{1}{2(1-\kappa)}$ for these two instance respectively. In addition, both of these two instances are at least $1/8-$well separated. The key observation is the indistinguishability of these two instances with $\ll 1/\kappa^2$ actions. 

Indeed, let $\mathbb{P}^+,\mathbb{P}^-$ be the distributions induced by the two problem instances and the learning algorithm. Since in either case, the noise distribution is standard Gaussian, and the reward distributions are identical, it follows that \[ D(\mathbb{P}^+(r_t,s_t^1)\|\mathbb{P}^-(r_t, s_t^1)|x_t = x) = \frac{(\kappa x)^2}{2}\le \frac{\kappa^2}{2}, \] where we have used standard results about the KL-divergence between two Gaussians. Further, since actions must be causal, and since the noise is independent, we conclude that over the whole trajectory,  
\[D(\mathbb{P}^+(\mathcal{H}_T)\|\mathbb{P}^-(\mathcal{H}_T))\le\frac{T\kappa^2}{2}.\]

\newcommand{\xav}{x^{\mathsf{av}}}

Let $\xav := (x^+ + x^-)/2 = \frac{1}{2(1-\kappa^2)}$. Observe that 
\begin{itemize}
    \item if the ground truth is $a^1= (1+\kappa)/2$ and $x_t \ge \xav$, then the algorithm incurs an instantaneous safety violation of at least $(1+\kappa)/2 \cdot \xav - 1/4 = \frac{1+\kappa}{2} \cdot \frac{1}{2(1-\kappa^2)} - \frac{1}{4}=\frac{\kappa}{4(1-\kappa)}\ge \frac{\kappa}{4}$; 
    \item if the ground truth is $a^1= (1-\kappa)/2$ $x_t < \xav$, then the algorithm incurs an instantaneous efficacy regret of at least $\frac{1}{2(1-\kappa)} - \frac{1}{2(1-\kappa^2)}\ge \frac{\kappa}{2}$
\end{itemize}

Let $\mathsf{A}$ be the event $\{ \#\{t : x_t \ge \xav\} \ge T/2\}.$ Using the Bretagnolle-Huber inequality \citep[Thm.~14.2]{lattimore2020bandit}, 

\[\mathbb{P}^+(\mathsf{A}) + \mathbb{P}^-(\mathsf{A}^c) \ge \frac{1}{2}\exp\left(D(\mathbb{P}^+(\mathcal{H}_T)\|\mathbb{P}^-(\mathcal{H}_T))\right) \ge \frac{1}{2}\exp(-T\kappa^2/2 ).\]

Let $\eff_T^-$ denote the efficacy regret incurred by the learner under $\mathbb{P}^-$ and $\saf_T^+$ denote the safety violation incurred by the learner under $\mathbb{P}^+$. Under the event $\mathsf{A},$ if the true $a$ was $(1+\kappa)/2,$ at least $T/2$ rounds incurred a safety regret of at least $\kappa/4,$ and so $\saf_T^+ \ge \kappa T/8$. Similarly, under $\mathsf{A}^c,$ at least $T/2$ rounds had $z_t = -1,$ implying that $\eff_T^- \ge T\kappa/8$.

But this implies that \[ \max( \mathbb{E}^-(\eff_T^-), \mathbb{E}^+(\saf_T^+))  \ge \frac{T\kappa}{8} \max(\mathbb{P}^+(\mathsf{A}), \mathbb{P}^-(\mathsf{A}^c)) \ge \frac{T\kappa}{32} \exp(-T\kappa^2/2). \]

For $T \ge 16,$ we may choose $\kappa = 1/\sqrt{T} < 1/4$ to conclude that in at least one instance, the safety or efficacy regret incurred must be at least $ \sqrt{T}/(32 e^{1/2}) \ge \sqrt{T}/64.$

\end{proof}

\subsection{Necessity of Dependence on Gaps.}\label{appx:logarithmic_lower_bound}

We conclude the theoretical part of this paper by showing Theorem~\ref{lowerpd}, via a reduction to prior lower bounds on the safe multi-armed bandit problem \citep{chen2022strategies}.

The safe MAB problem is parametrised by $d$ arms with mean rewards $\mu_k$ and mean safety risks $\nu_k$ each. The optimal arm, $k^*$ has reward $\mu_*$ and the safety risk $ny_* < \alpha$. The associated efficacy and safety gaps are $\Delta_k := (\mu_* - \mu_k)_+$ and $\Gamma_k := (\nu_k - \alpha)_+.$ In each round, the leaner is required to select one arm, and observes bounded signals with the above mean for both the rewards and safety. Implicitly, this can be thought of as a linear bandit setting, with the known constraints being that $x$ lies in a simplex, the reward vector $\theta,$ and the constraint vector $a$.   This reduction, however, is not completely correct: in the safe MAB problem, the actions are required to lie entirely on the corner points of the simplex, and we are not allowed to play in the interior. While it is standard to view $x$ as a probability of selecting each arm in a MAB instance, this reduction fails due to the nonlinearity in our metrics. Indeed, the safe MAB problem considers the metrics \[ \eff_T^{\mathsf{MAB}} := \sum (\mu^* - \mu_{A_t})_+ , \saf_T^{\mathsf{MAB}} := \sum (\nu_{A_t} - \alpha)_+.\] As a result, if the optimum of the SLB problem lies away from the corner points of the simplex, then the SLB problem can incur low regret, while the corresponding MAB actions would incur linear regret. Nevertheless, we shall argue below that for carefully designed instances, a low regret in the linear bandit problem does ensure nontrivial regret in the safe MAB problem.

The main result we shall use is the following, which is a mild variation of Proposition 6 of \citet{chen2022strategies}, and can be shown using their proof. 
\begin{lemma}\label{lemma:mab_lower_bound}
    Let $f:\mathbb{N} \to [0,\infty)$ be any function fixed function such that $f(T) \le T$ for all $T$. If an algorithm ensures that for every safe MAB instance, suboptimal arms are not played more than $f(T)$ times in expectation, then for every $\theta, a,$ there exists a choice of arm distributions for the safe MAB instance for which the means are as described, and the number of times each suboptimal arm $k$ is played is lower bounded in expectation as \[ \mathbb{E}[N_T^k] \ge \frac{1}{ (d(\mu_k \| \mu_*) \indi\{\mu_k < \mu_*\} + d(\nu_k\|\alpha) \indi\{\nu_k > \alpha\})} \cdot \left(  (1-f(T)/T) \log \frac{T}{f(T)} - \log(2)\right),\] where $d(u\|v)$ is the KL divergence between Bernoulli laws with means $u$ and $v$. In particular, these distributions are simply Bernoulli laws with the above means.
\end{lemma}

Our argument for the linear bandit proceeds thus. We shall carefully design a safe linear bandit instance for which we essentially provide multi-armed bandit feedback by using the standard reduction that each coordinate of $x_t$ represents the probability of pulling the corresponding arm. We shall show that in the selected instance, achieving low linear regret ensures that the MAB regret is controlled (although to a weaker level). Then exploiting the above lower bound, we shall argue that the regret of the safe linear bandit cannot be too good, since it would violate the above lower bound.

\begin{proof}[Proof of Theorem~\ref{lowerpd}] We first carefully describe our main constructions for the SLB and MAB, form a crude bound that allows us to use Lemma~\ref{lemma:mab_lower_bound}, and then refine the analysis to show effective lower bounds on the SLB regret.

\paragraph{SLB Instance.} We work with $d = 2$ with a single unknown constraint. Let $\theta = (\theta_1, \theta_2)$ and $a^1 = (\alpha, a^1_2)$ be vectors in $[0,1]^2$ such that $\theta_2 > \theta_1 > 0, a^1_2 > \alpha > 0$ and $\theta_2 \alpha < \theta_1 a^1_2.$ The safe bandit instance we design is \[ \max\langle \theta,x\rangle : x_1 \ge 0, x_2\ge 0, x_1 + x_2 \le 1,\langle a,x\rangle \le \alpha,\] where the last constraint is unknown and the rest are known. Let us call the three known constraints $a^2, a^3, a^4$. There are 6 BISs, with the associated points and gaps shown in Table~\ref{table:lower_bound_BISs} below. The only points meeting the constraints $a^3$ and $a^4$ is $(0,1)$, which is infeasible. Note that the situation is highly degenerate at the optimal point is $x^* = (1,0)$ and three distinct BISs activate it. Nevertheless, each of these BISs is full rank. Further, since the algorithm ensures that $\saf_T$ and $\eff_T$ are both $O(\sqrt{T})$ in general, our discussion below is effective. 

\begin{table}[ht]
\centering
\caption{Description of BISs in our construction.}\label{table:lower_bound_BISs}
\begin{tabular}{c|c|c|c}
BIS & Activating Point & $\zeta_*(I)$ & $\eta_*(I)$ \\ \hline
$\{1,2\}$ &   $(0,\alpha/a^1_2 )$ & 0 & $(\theta_1 a^1_2 - \alpha\theta_2)/(a^1_2 + \theta_2)$ \\
$\{1,3\}$ &  $(1,0)$  & 0  & 0 \\
$\{1,4\}$ &  $(1,0)$ & 0 & 0\\
$\{2,3\}$ & $(0,0 )$ &0 & $\theta_1$  \\
$\{2,4\}$ & $(1,0)$ &0 &0 \\ 
$\{3,4\}$ & $\emptyset$ & $a^1_2 - \alpha$ & 0
\end{tabular}
\end{table}

The gap of this instance is \[ \Gamma := \min \left( \theta_1, a^1_2 - \alpha, \frac{\theta_1 a^1_2 - \alpha \theta_2}{a^1_2 + \theta_2}\right).\] Our construction requires that this is at least $\Omega(\min(\theta_1, a^1_2 - \alpha))$. This can always be ensured, example, by using the parameterisation $\theta_2 = 2\theta_1, a^1_2 =  4\theta_1, \alpha = \theta_1/2,$ whence the expressions work out to \[ a^1_2 - \alpha = 7\theta_1/2, \frac{\theta_1 a^1_2 - \alpha \theta_2}{a^1_2 + \theta_2} = 3\theta_1/5,\] giving us $\Gamma \ge \theta_1/2.$ We further impose the condition $4\theta_1 < 1/4.$ Thus, this instance lets us express every value of $\Gamma < 1/32.$ 

\paragraph{Safe MAB Instance.} Let us now describe the associated MAB instance. We work with three arms of means $\mu = (\nicefrac12 + \theta_1,\nicefrac12 + \theta_2,\nicefrac12)$ and risks $(\nicefrac12+ \alpha, \nicefrac12 + a^1_2, \nicefrac12)$. In each case, the underlying laws are Benoullis with the associated mean, all taken to be independent, which forms the family of instances that underly Lemma~\ref{lemma:mab_lower_bound}. The connection to the linear bandit instance is as follows: each time we pick $(x_1,x_2),$ we sample a random variable in $\{1,2,3\}$ according to the pmf $(x_1, x_2, 1-x_1-x_2)$, pull the corresponding arm, and then supply the resulting rewards and risks with $\nicefrac12$ subtracted to the linear bandit instance. Note that this is an unbiased measurement of the mean for the linear bandit, since \[ \mathbb{E}[R] = x_1 \cdot (\nicefrac12 +\theta_1) + x_2 \cdot \nicefrac12 + \theta_2) + x_2 \cdot (\nicefrac12) - \nicefrac12 = x_1\theta_1 + x_2\theta_2 \] and similarly for the safety risk. These $\nicefrac12$ are added to ensure because then the KL divergences appearing in the bound of Lemma~\ref{lemma:mab_lower_bound} take the form $d(\nicefrac12 \|\nicefrac12 + \theta_1)$ and $d( \nicefrac12 + a^1_2 \| \nicefrac12 + \alpha)$, and the arguments are bounded away from $0$ and $1$, ensuring that the behaviour for small $\theta_1$ is quadratic rather than the potentially worse dependence near $0$ and $1$. To ensure this good behaviour, we use that $a^1_2 < 1/4,$ due to which $a^1_2 + 1/2 < 13/14$ is bounded away from $1$, which is the origin of our condition $\theta_1 \le 1/16$ in the previous paragraph. The key observation is that in the safe MAB instance, $\mathbb{E}[N_T^2] = \sum x_{t,2}$ and $\mathbb{E}[N_T^3] = \sum (1 - x_{t,1} - x_{t,2}),$ where $x_{t,k}$ is the $k$th component of $x_t$.

\paragraph {Crude Bound.} We first show that as long as the algorithm ensures that $\max(\eff_T, \saf_T) = O( T^{1-c}),$ the play of suboptimal arms in the MAB instance is at least $\Omega(\theta_1^{-2} \log T)$. Fix $\theta_1$ and the above models. Suppose that the safe linear bandit ensures that $\eff_T \le g(T)$ and $\saf_T \le g(T)$ for every instance, where $g(T) \le T$ is an arbitrary monotonic function. Let $\zeta > 0$ be a parameter that we will fix later. Then observe that if the linear bandit instance ever plays a point $(x_1,x_2)$ such that \[ \langle a^1,x\rangle \ge \alpha + \zeta \quad \textrm{or} \quad \langle\theta, x\rangle \le \theta_1 -\zeta,\] then it would incur a point wise cost of at least $\gamma$ in the round, for either $\eff_T$ or $\saf_T$. This means that the number of rounds in which it plays such points is bounded as $ g(T)/\zeta.$  So, in at least $\max(T - g(T)/\zeta, 0)$ rounds, the safe linear bandit instance plays in the region \[ P_\zeta := \{ \ip{a^1,x} \le \alpha + \zeta , \ip{\theta, x} \ge \theta_1 - \zeta, x_1 \ge 0, x_2 \ge 0, x_1 + x_2 \le 1 \}. \] Now notice that both $x_2$ and $x_1 + x_2$ are upper bounded in this region. Indeed, the corner points of this region are \[ \left(1-\frac\zeta\theta_1, 0\right), \left(1-\frac\zeta{a^1_2 - \alpha}, \frac{\zeta}{a^1_2 - \alpha}\right),  \left( 1- \frac{\zeta}{\theta_1} \left\{ 1 + \frac{\theta_2(\theta_1 + \alpha)}{\theta_1(\theta_1 a^1_2 - \alpha \theta_2)}\right\}, \frac{\theta_1 + \alpha}{\theta_1 a^1_2 - \alpha \theta_2} \frac{\zeta}{\theta_1}\right), (1,0),\]
 and so ensuring that $a^1_2 - \alpha, \theta_1 a^1_2 - \alpha \theta_2 = \Omega(\theta_1),$ we have \[x \in P_\zeta \implies x_2 \le \zeta/\theta_1, (1-x_1 - x_2)  = O(\zeta/\theta_1). \] Of course, outside of $P_\zeta,$ $x_2 \le 1, 1- x_1 - x_2 \le 1$. The calculation holds no matter the $\zeta$ we chose so long as $\zeta \ll \theta_1$. This means that for every $\zeta = O(\theta_1),$ \begin{align*} \mathbb{E}[N_T^2] &= \sum \mathbb{E}[x_{t,2}] \le O(\zeta/\theta_1) T + \frac{g(T)}{\zeta} \\ \mathbb{E}[N_T^3] &= \sum \mathbb{E}[(1- x_{t,1} - x_{t,2})] \le O(\zeta/\theta_1) T + \frac{g(T)}{\zeta},\end{align*} i.e., we have shown that the safe MAB incurs regret bounds of at most $f(T) = O(\zeta T) + g(T)\theta_1/\zeta$ for both $\eff_T$ and $\saf_T$. 

 Since $g(T)  \le C T^{1-c}$ for some constants $C, c,$ by taking $\zeta = T^{-c/2},$ for large enough $t$, we thus have the low-regret bound $\max( \mathbb{E}[\eff_T^{\mathsf{MAB}}] , \mathbb{E}[\saf_T^{\mathsf{MAB}}]) \le C T^{1-c/2}.$ But then, by Lemma~\ref{lemma:mab_lower_bound}, it must follow that as $T \to \infty,$ \begin{align*}
    \mathbb{E}[N_T^2] \ge \frac{1}{d(\nicefrac12 + 4\theta_1\| \nicefrac12 + \theta_1/2) }\left( (1-o(1)) \frac{c}{2} \log T - O(1) \right)  = \Omega(\theta_1^{-2} \log T), \\
    \textrm{or } \mathbb{E}[N_T^3] \ge \frac{1}{d(\nicefrac12\|\nicefrac12 + \theta_1 )} \left( (1-o(1)) \frac{c}{2} \log T - O(1) \right) = \Omega(\theta_1^{-2} \log T)
 \end{align*} 

To use these bounds effectively, we employ a computer algebra system to argue that\footnote{Observe that since the divergences considered are minimised to $0$ at $\theta_1 = 0,$ the local behaviour for small $\theta_1$ is quadratic. Further, the function is smooth in $\theta_1$. Thus, for large enough $K$, there exists an interval $[0,\theta_1(K)]$ such that for any $x$ in this region, $d(1/2 + u x\|1/2 + vx) \le Kx^2$. We simply plugged in various constants for $K$ until we found that $\theta_1(27) \ge \nicefrac{1}{16}$.} \[ \forall \theta_1 \le \nicefrac1{16}, d(\nicefrac12 + 4\theta_1\| \nicefrac12 + \theta_1/2) \le 27 \theta_1^2, d(\nicefrac12\|\nicefrac12 + \theta_1 ) \le 27\theta_1^2.\] Concretely, then the bounds above yield \[ \mathbb{E}[N_T^2 + N_T^3] \ge \frac{c}{27\theta_1^2} \left( (1-o(1)) \log T - \frac1c\log(4)\right), \] where the $o(1)$ term is $C/T^{c/2}$.

Note again that this bound is effective for our case since the method \algoname does achieve $\max(\eff_T, \saf_T) = \tilde{O}(\sqrt{T})$ with high probability, in which case we can set $c = \nicefrac12+\gamma$ for any $\gamma > 0$ in the above.

\paragraph{Lower Bounds on SLB.} Let us now come to showing the claims. We select the instance $\theta_2 = 2\theta_1, a^1_2 = 4\theta_1, \alpha = \theta_1/2$. Notice that in this case, the gaps are $(\theta_1, 7\theta_1/2, 3\theta_1/5)$, and so $\Gamma \ge \theta_1/2$. Further, $\theta_1 a^1_2 - \alpha \theta_2 = 3\theta_1,$ and so the claim on $P_\zeta$ above remains valid for all $\zeta \le \theta_1,$ and so against this instance, the above lower bounds on \( \mathbb{E}[N_T^2] + \mathbb{E}[N_T^3] \).

But, observe that for any choice of $x_1,x_2,$ it holds that the instantaneous efficacy regret and safety violations are  \begin{align*} (\theta_ 1- \theta_1 x_1 - \theta_2 x_2)_+ &= ( \theta_1(1-x_1 - x_2) - (\theta_2 - \theta_1)x_2 )_+ = \theta_1 ((1-x_1 - x_2) - x_2)_+\\ 
(\alpha x_1 + a^1_2 x_2 - \alpha)_+ &= ((a^1_2 - \alpha) x_2 - \alpha(1-x_1 - x_2))_+ = \frac{\theta_1}{2} (7 x_2 - (1-x_1 - x_2))_+\end{align*}

But notice that the only way both of these quantities are $0$ is if $x_2 \ge (1-x_1 - x_2) \ge 7x_2 \implies x_2 = 1-x_1 - x_2 = 0 \iff x_1 = 1$. So, in any round such that $x_1 \neq 1,$ at least one of these quantities is nonzero. More quantitatively, we have 
\begin{align*} \mathbb{E}[\eff_T] + \mathbb{E}[\saf_T] &\ge \sum \mathbb{E}[(\theta_1 - \theta_1x_{t,1} - \theta_2x_{t,2}) + (\alpha x_{t,1} - a^1_2 x_{t,2} - \alpha)] \\ &\ge \theta_1 \sum \frac{5}{2} \mathbb{E}[x_{t,2}] + \frac{1}{2} \mathbb{E}[(1-x_{t,1}  -x_{t,2})] \\ &\ge \frac{ \theta_1}{2} (\mathbb{E}[N_T^2] + \mathbb{E}[N_T^3]) \ge \frac{c (1-o(1)}{54 \theta_1} \log(T) - O(1),\end{align*} which yields the result upon recalling that $\theta_1 \ge \Gamma \ge \theta_1/2$.

\end{proof}

\section{Alternative Safety Metrics}\label{appx:alternate_relaxations}

We briefly investigate the behaviour of alternative safety metrics of the form \[ \saf_T^f := \sum_{t \le T} f(\max_i (\ip{a^i,x_t}-\alpha^i)_+), \] where $f$ is some increasing $h$-H\"{o}lder continuous map such that $\lim_{x \searrow 0}f(x) = 0$. Note that this section should be read \emph{after} the reader is familiar with our typical proof techniques.

Note that due to our assumption that $\|a^i\|\le 1,\|x\| \le 1,$ it follows that $(\ip{a^i,x_t} - \alpha^i)_+ \le 2:$ since the problem is feasible, and $|\ip{a^i,x^*}| \le 1,$ it follows that $ -\mathbf{1} \le Ax^* \le \alpha,$ and so $\ip{a^i,x} - \alpha^i \le 1 - (-1)$. Thus, only the behaviour of $f$ over $[0,2]$ matters.

Now, since $f$ is H\"{o}lder continuous, and $f(0^+) = 0,$ its behaviour near $0$ is as $f(x) \le C x^{h}$. In this case, we may as well study the behaviour of $f_h := x \mapsto x^h,$ which we argue determines the lower and upper bounds. Before proceeding, note that we may uniformly bound the noise scale by $2$, since we know that roundwise inefficacy or constraint violation can be at most $2$. 

Now, for the penalty $\saf_T^h = \sum_{ t\le T} (\max_i \ip{a^i,x_t} - \alpha^i)_+^h,$ observe that if $h \ge 2,$ we can direclty bound the behaviour of $\saf_T^h$ using \[ \saf_T^h \le \sum \rho_t^{h}(x_t;\delta) \le 2^{h-2} \sum \rho_t^{2}(x_t;\delta) \le 2^{h-2} \cdot O(d^2 \log^2 T). \] Thus, the only interesting behaviour is when $h < 2$. 

For $h \in (0,2),$ applying H\"{o}lder's inequality with $p = 2/h>1,$ we have \[\saf_T^h \le \sum \rho_t^{h}(x_t;\delta) \le \left( \sum_{t} (\rho_t^h)^{2/h}\right)^{h/2} \cdot (\sum 1^{2/(2-h)})^{1-h/2} = T^{1-h/2} \cdot O(d^{h} \log^{h} T).\] We now note that modifying the analysis of \S\ref{sec:lower_bound}, this rate of safety decay is tight. Indeed, our construction in that section shows that for $t \le 1/\kappa^2,$ one either incurs a roundwise inefficacy of $\kappa$ or a roundwise violation of $\kappa$. Accounting for the power-cost, we get a lower bound of the form \[ \textit{either } \eff_T \gtrsim \kappa \cdot \min(\kappa^{-2}, T)  \textit{ or } \saf_T^h \gtrsim \kappa^h \cdot \min(\kappa^{-2}, T).\] But again, taking $\kappa = T^{-1/2}$, we find that \[ \textit{either } \eff_T \ge \sqrt{T} \textit{ or } \saf_T^h \ge T^{1-h/2}.\] Thus, up to polylog terms, the behaviour of \textsc{doss} remains tight in terms of the $\saf_T^h$ behaviour, simultaneously for every $h > 0$.

Coming back to general smooth losses, we immediately note that the same analysis extends to any loss that is $h$-H\"{o}lder: using the bound $f(x) \le C x^h$, \[ \saf_T^f \le C \sum_t \rho_t^h(x_t;\delta),\] and the bound follows. This extends to losses $f$ that are smooth in some interval near $0^+$ of the form $(0,k)$. For the upper bound, we may decompose the net violation as \[\saf_T^f \le \sum_{t} f(\rho_t)\mathbf{1}\{\rho_t > k\} + \sum_{t} f(\rho_t) \mathbf{1}\{\rho_t \le k\}.\] The latter term can be dealt with as above, since $f(x) \le C x^h$ over $(0,k),$ while the former term can be bounded as \[ \sum_{t} f(\rho_t) \indi\{\rho_t \ge k\} \le (\max_{x \in [0,2]} f(x)) \sum_{t} \rho_t^2/k^2 = O( k^{-2} d^2 \log^2 T),\] leading to an additive polylogarithmic overhead beyond the main term. The lower bound also generalises: if on $(0,k)$, $f$ is $h$-H\"{o}lder but not $h'$-H\"{o}lder for any $h' > h,$ then there exists some interval $(0,k')$ over which $f/x^h$ remains both lower and upper bounded, and we can employ our lower bound for $\saf_T^h$ for $T\gg 1/(k')^2.$

\end{document}